\newcommand{\ie}{\textit{i}.\textit{e}.,}
\newcommand{\eg}{\textit{e}.\textit{g}.,}
\newcommand{\etc}{\textit{etc}.}
\definecolor{mygreen}{rgb}{0., 0.573, 0.27}
\definecolor{myred}{rgb}{0.83, 0.078, 0.353}
\definecolor{myorange}{rgb}{1.0, 0.62, 0.14}
\theoremstyle{plain}
\newtheorem{theorem}{Theorem}
\theoremstyle{definition}
\newtheorem{definition}[theorem]{Definition}
\newtheorem{assumption}[theorem]{Assumption}
\theoremstyle{remark}
\begin{document}
%
% paper title
% Titles are generally capitalized except for words such as a, an, and, as,
% at, but, by, for, in, nor, of, on, or, the, to and up, which are usually
% not capitalized unless they are the first or last word of the title.
% Linebreaks \\ can be used within to get better formatting as desired.
% Do not put math or special symbols in the title.
\title{Learning Dynamic Graph Embeddings with Neural Controlled Differential Equations}

%
%
% author names and IEEE memberships
% note positions of commas and nonbreaking spaces ( ~ ) LaTeX will not break
% a structure at a ~ so this keeps an author's name from being broken across
% two lines.
% use \thanks{} to gain access to the first footnote area
% a separate \thanks must be used for each paragraph as LaTeX2e's \thanks
% was not built to handle multiple paragraphs
%
%
%\IEEEcompsocitemizethanks is a special \thanks that produces the bulleted
% lists the Computer Society journals use for "first footnote" author
% affiliations. Use \IEEEcompsocthanksitem which works much like \item
% for each affiliation group. When not in compsoc mode,
% \IEEEcompsocitemizethanks becomes like \thanks and
% \IEEEcompsocthanksitem becomes a line break with idention. This
% facilitates dual compilation, although admittedly the differences in the
% desired content of \author between the different types of papers makes a
% one-size-fits-all approach a daunting prospect. For instance, compsoc 
% journal papers have the author affiliations above the "Manuscript
% received ..."  text while in non-compsoc journals this is reversed. Sigh.

\author{Tiexin Qin, Benjamin Walker, Terry Lyons, Hong Yan, \emph{Life Fellow, IEEE} and Haoliang Li
% \author{Michael~Shell,~\IEEEmembership{Member,~IEEE,}
%         John~Doe,~\IEEEmembership{Fellow,~OSA,}
%         and~Jane~Doe,~\IEEEmembership{Life~Fellow,~IEEE}% <-this % stops a space
\IEEEcompsocitemizethanks{\IEEEcompsocthanksitem Tiexin Qin, Hong Yan and Haoliang Li are with the Department
of Electrical and Engineering, City University of Hong Kong, Hong Kong. Email:~{tiexinqin2-c@my.cityu.edu.hk, ityan@cityu.edu.hk, haoliang.li@cityu.edu.hk}
\IEEEcompsocthanksitem Benjamin Walker and Terry Lyons are with the Mathematical Institute, University of Oxford, Oxford, United Kingdom. Email:~{benjamin.walker2@balliol.ox.ac.uk, tlyons@maths.ox.ac.uk}. \protect\hfil\break\vspace{-0.2cm}}% <-this % stops a space
\thanks{Manuscript received April 19, 2005; revised August 26, 2015. \\
(Corresponding author: Haoliang Li.)}
}

% note the % following the last \IEEEmembership and also \thanks - 
% these prevent an unwanted space from occurring between the last author name
% and the end of the author line. i.e., if you had this:
% 
% \author{....lastname \thanks{...} \thanks{...} }
%                     ^------------^------------^----Do not want these spaces!
%
% a space would be appended to the last name and could cause every name on that
% line to be shifted left slightly. This is one of those "LaTeX things". For
% instance, "\textbf{A} \textbf{B}" will typeset as "A B" not "AB". To get
% "AB" then you have to do: "\textbf{A}\textbf{B}"
% \thanks is no different in this regard, so shield the last } of each \thanks
% that ends a line with a % and do not let a space in before the next \thanks.
% Spaces after \IEEEmembership other than the last one are OK (and needed) as
% you are supposed to have spaces between the names. For what it is worth,
% this is a minor point as most people would not even notice if the said evil
% space somehow managed to creep in.

% The paper headers
\markboth{Journal of \LaTeX\ Class Files,~Vol.~14, No.~8, August~2015}%
{Shell \MakeLowercase{\textit{et al.}}: Bare Advanced Demo of IEEEtran.cls for IEEE Computer Society Journals}
% The only time the second header will appear is for the odd numbered pages
% after the title page when using the twoside option.
% 
% *** Note that you probably will NOT want to include the author's ***
% *** name in the headers of peer review papers.                   ***
% You can use \ifCLASSOPTIONpeerreview for conditional compilation here if
% you desire.

% The publisher's ID mark at the bottom of the page is less important with
% Computer Society journal papers as those publications place the marks
% outside of the main text columns and, therefore, unlike regular IEEE
% journals, the available text space is not reduced by their presence.
% If you want to put a publisher's ID mark on the page you can do it like
% this:
%\IEEEpubid{0000--0000/00\$00.00~\copyright~2015 IEEE}
% or like this to get the Computer Society new two part style.
%\IEEEpubid{\makebox[\columnwidth]{\hfill 0000--0000/00/\$00.00~\copyright~2015 IEEE}%
%\hspace{\columnsep}\makebox[\columnwidth]{Published by the IEEE Computer Society\hfill}}
% Remember, if you use this you must call \IEEEpubidadjcol in the second
% column for its text to clear the IEEEpubid mark (Computer Society journal
% papers don't need this extra clearance.)

% use for special paper notices
%\IEEEspecialpapernotice{(Invited Paper)}

% for Computer Society papers, we must declare the abstract and index terms
% PRIOR to the title within the \IEEEtitleabstractindextext IEEEtran
% command as these need to go into the title area created by \maketitle.
% As a general rule, do not put math, special symbols or citations
% in the abstract or keywords.
\IEEEtitleabstractindextext{%
\begin{abstract}
This paper focuses on representation learning for dynamic graphs with temporal interactions. A fundamental issue is that both the graph structure and the nodes own their own dynamics, and their blending induces intractable complexity in the temporal evolution over graphs. Drawing inspiration from the recent progress of physical dynamic models in deep neural networks, we propose~\emph{Graph Neural Controlled Differential Equations} (GN-CDEs), a continuous-time framework that jointly models node embeddings and structural dynamics by incorporating a graph enhanced neural network vector field with a time-varying graph path as the control signal.
Our framework exhibits several desirable characteristics, including the ability to express dynamics on evolving graphs without piecewise integration, the capability to calibrate trajectories with subsequent data, and robustness to missing observations. Empirical evaluation on a range of dynamic graph representation learning tasks demonstrates the effectiveness of our proposed approach in capturing the complex dynamics of dynamic graphs. Our code is available at \url{https://github.com/WonderSeven/graph-neural-cdes}.
\end{abstract}

% Note that keywords are not normally used for peer review papers.
\begin{IEEEkeywords}
Dynamic graph, embedding learning, graph neural network, controlled differential equations
\end{IEEEkeywords}}

% make the title area
\maketitle

% To allow for easy dual compilation without having to reenter the
% abstract/keywords data, the \IEEEtitleabstractindextext text will
% not be used in maketitle, but will appear (i.e., to be "transported")
% here as \IEEEdisplaynontitleabstractindextext when compsoc mode
% is not selected <OR> if conference mode is selected - because compsoc
% conference papers position the abstract like regular (non-compsoc)
% papers do!
\IEEEdisplaynontitleabstractindextext
% \IEEEdisplaynontitleabstractindextext has no effect when using
% compsoc under a non-conference mode.

% For peer review papers, you can put extra information on the cover
% page as needed:
% \ifCLASSOPTIONpeerreview
% \begin{center} \bfseries EDICS Category: 3-BBND \end{center}
% \fi
%
% For peerreview papers, this IEEEtran command inserts a page break and
% creates the second title. It will be ignored for other modes.
\IEEEpeerreviewmaketitle

% ======================================================================= 
\ifCLASSOPTIONcompsoc
\IEEEraisesectionheading{\section{Introduction}\label{sec:introduction}}
\else
\section{Introduction}
\label{sec:intro}
\fi
\IEEEPARstart{G}{raph} representation learning analyzes complex structured data by representing node attributes and relationships in a low-dimensional vector space. In recent years, it has attracted increasing attention owing to the prevalence of graph-structured data. The use of deep neural networks, particularly graph neural networks (GNNs), has further facilitated the ability of graph representation learning to represent nodes. For example, GNNs have been used to study social media networks~\cite{Fan2019WWW, Sankar2021WC}, protein interactions~\cite{Gainza2020Nature}, traffic flow forecasting~\cite{Lan2022ICML}, and neuroscience~\cite{Bessadok2022TPAMI}.

Many applications of graph representation learning involve temporal interactions, yet most existing methods do not consider such dynamics. As \cite{Xu2020ICLR} pointed out, ignoring the temporal evolution in dynamic graphs can result in suboptimal performance. In certain scenarios, the dynamic structure holds key insights into the system. For example, when using a pandemic model to predict the spread of infection, the evolution of social relationships due to human events (\eg~immigration, travel, education) must be taken into account~\cite{Zhong2021COVID}. In another example, malignant cells within tumors secrete proteins that influence neighboring stromal cells and create an environment conducive to their growth and metastasis~\cite{Podhajcer2008CMR}. For more examples, see~\cite{Kazemi2020JMLR}.

In this article, we focus upon the realm of dynamic graph representation learning, where explicitly modeling both the time- and node-dependent interactions is generally required to better represent temporal evolution and the dynamic nature of the data. Despite the importance, it can be rather challenging to capture both of these dynamics effectively, especially when the changes are continuous-time and nonlinear. Early work approached this by treating a dynamic graph as a sequence of snapshots \cite{Goyal2018arXiv, Kumar2019KDD, Pareja2020AAAI}, adapting existing GNNs through temporal batching. However, such methods would introduce artificial temporal boundaries. Moreover, temporal graph networks have emerged which treat the graph as an event stream and incorporate time encoding into message passing for continuous-time modeling~\citep{Rossi2020ICML, Trivedi2019ICLR, Xu2020ICLR, Cong2023ICLR}. While these methods improve dynamic modeling, they are often limited to providing node temporal representations at discrete interaction timestamps and suffer from the staleness problem~\citep{Kazemi2020JMLR}. Recently, neural differential equations have been introduced to model node and edge representations that evolve over time~\cite{Zang2020KDD, Yan2024NN, Choi2022AAAI, Liu2024TITS}. Nevertheless, these approaches still rely on a static graph structure or segments of static graphs for dynamic inference.

To address these limitations, we propose a novel and unified framework for dynamic graph representation learning that handles both the \emph{structural dynamics} and \emph{intrinsic node dynamics} simultaneously, wherein node embeddings are considered to undergo a continuous evolution over time, grounded in Neural Controlled Differential Equations (Neural CDEs)~\cite{Kidger2020CDE} built for continuous time-series modeling. Neural CDEs are a powerful concept that have the desired calibration ability with subsequent data, robustness to missing values, and memory-efficient training from  adjoint-based backpropagation. We extend this concept to dynamic graphs, which we refer to as Graph Neural Controlled Differential Equations (GN-CDEs). Specifically, we construct a continuous-time path for the evolving graph topology, which serves as the control signal driving our neural controlled differential equation. After that, we incorporate the graph structure directly into the vector field to bias the learned dynamics towards solutions that are conditioned on the current graph structure. Based on these, our model enhances its ability to capture the causal effects inherent in dynamic graph structures. Excitingly, the capability of adjusting the predicted trajectories with incoming partially observed and irregularly sampled data from Neural CDEs still holds for our model, making it promising for practical usage. It is worth noting that, GN-CDEs are a flexible framework that we can leverage to tackle node attribute prediction, dynamic node classification, and temporal link prediction tasks with minor modifications. Besides, it can easily be extended to more complex graph structures (such as directed graphs and knowledge graphs). To further demonstrate the superiority of our method, we experimentally evaluate it on node attribute prediction and link affinity prediction tasks under evolving graph structures, where our method achieves favorable results across different setups.
The contributions of this work are summarized below.

\begin{itemize}[noitemsep]

\item We introduce GN-CDEs, a generic framework for modelling dynamic graphs by encoding their structure as paths within a controlled differential equation, allowing for continuous-time representation and evolution of node embeddings.

\item We provide a theoretical analysis of our model, including guarantees on the existence and uniqueness of solutions, and establish formal equivalences and differences between GN-CDE and related Neural ODE- and CDE-based approaches.

\item To ensure scalability, we propose a simplified version of GN-CDE that retains the key multiplicative interaction between the hidden state and control path while enabling efficient computation on large graphs.

\item Experimental results verify that our proposed method achieves superior performance compared to a range of baseline methods across different graph tasks.

\end{itemize}

% ------------------------------------------------------------------
\subsection{Related Works}
\label{sec:related_work}

\noindent\textbf{Graph Embedding Learning.}~Graph representation learning, which aims to derive meaningful embeddings from the topological structure and node attributes of a graph, has been extensively studied for relational data analysis. Early works include graph factorization techniques~\citep{Ahmed2013ICWWW} and random walk-based methods~\citep{Tang2015WWW}. In line with the success of deep learning, Graph Neural Networks have emerged as powerful tools that learn node representations by aggregating information from neighboring nodes across multiple layers, achieving outstanding performance in various tasks~\citep{Welling2016ICLR, Wu2020TNNLS}. However, all these aforementioned models are designed for representation learning on static graphs, without temporal information.

In the real world, graphs are inherently dynamic rather than static. To name but a few, the interactions of users change in time on e-commerce and social platforms. However, due to the evolutionary complexity and dynamics of time-varying graphs, it is challenging to develop useful tools for dynamic graph representation learning. To address this, considerable research attention has been devoted to dynamic graph neural networks~\cite{Feng2024arXiv}. For example, TGN~\citep{Rossi2020ICML} introduces an event-based message passing framework for generating temporal embeddings, which is further enhanced by~\citep{Tjandra2024TGN2} through incorporating an identification between source and target nodes. Other representative methods, such as DyRep~\citep{Trivedi2019ICLR} and TGAT~\citep{Xu2020ICLR}, leverage self-attention mechanisms to aggregate temporal-topological neighbors. Meanwhile, approaches like CAWN~\citep{Yu2023NIPS} and GraphMixer~\cite{Cong2023ICLR} focus on learning edge embeddings for temporal link prediction tasks. Despite their successes, these event-driven models are often limited to providing node temporal representations at discrete interaction timestamps and are prone to the staleness problem~\citep{Kazemi2020JMLR}, where embeddings become outdated as the graph continues to evolve.

\noindent\textbf{Neural Differential Equations.}~Neural differential equations offer a continuous formulation for modeling temporal dynamics on hidden representations via neural network parameterized vector fields~\cite{Chen2018ODE}. One direction that bridges this framework with GNNs is continuous-depth GNNs, which utilize the integration procedure to simulate the continuous message-passing flow over graphs~\cite{Poli2021arXiv, Xhonneux2020ICML}. Beyond this, graphs have also been used to encode spatial dependencies in spatial-temporal forecasting tasks, where differential equation models capture temporal dynamics and GNNs model spatial relations~\citep{ Choi2022AAAI, Liu2025KDD}. Additionally, GNNs have been explored as learned discretizations of partial differential equation operators on unstructured meshes for simulating complex physics~\citep{Pfaff2020ICLR,Li2023NIPS,Zeng2025ICLR}. Notably, most existing works assume a fixed graph topology over time, leaving the incorporation of structural dynamics an unresolved challenge.

% ------------------------------------------------------------------
\subsection{Paper organization and notations}
The rest of this paper is structured as follows. In Section~\ref{sec:preliminary}, we briefly introduce the background knowledge. In Section~\ref{sec:method}, we describe our main model, compare it with the competitors, and provide an approximation for efficient computation. The application in several representative graph representation learning tasks is also presented. In Section~\ref{sec:experiments}, we report the empirical performance of our model in node attribute prediction task. Finally, we conclude in Section~\ref{sec:conclusion}. Complete proofs of our theoretical results follow thereafter in the appendix.

Before continuing, we introduce several notations used throughout the paper. First of all, we use lower-case letters to denote scalars, bold lower-case letters to denote vectors, and bold upper-case letters to denote matrices. For a matrix $\mathbf{X}$, we represent the $i$-th row of $\mathbf{X}$ as $\mathbf{X}^{(i)}$, and the element at the $i$-th row and $j$-th column as $\mathbf{X}^{(i,j)}$.

% ======================================================================= 
\section{Preliminary}
\label{sec:preliminary}
This section briefly reviews the basic definitions and common manners to learn graph embeddings, and then presents two typical neural differential equations.
% ------------------------------------------------------------------
\subsection{Graph Embedding Learning}
\label{sec:graph_emb_learn}
\noindent\textbf{Static graph.}~A static graph only contains a fixed topological structure. Let a static graph represented as $\mathcal{G}=\{\mathcal{V}, \mathcal{E}\}$ where $\mathcal{V}$ is the set of nodes, and $\mathcal{E} \subseteq \mathcal{V}\times \mathcal{V}$ is the set of edges. Let $v_i \in \mathcal{V}$ denote a node and $e_{ij} \in \mathcal{E}$ denote an edge between node $v_i$ and $v_j$, $i,j \in \{1,\ldots,|\mathcal{V}|\}$. Then the topology of the graph can be represented by an adjacency matrix $\mathbf{A} \in \mathbb{R}^{|\mathcal{V}| \times |\mathcal{V}|}$ where $\mathbf{A}^{(i,j)}=1$ if $e_{ij} \in \mathcal{E}$ otherwise 0. In most complex scenarios, the graph is equipped with a node attribute matrix $\mathbf{F}=\{\mathbf{F}^{(i)}\}_{i=1}^{|\mathcal{V}|}, \mathbf{F}^{(i)} \in \mathbb{R}^m$ and edge feature matrix $\mathbf{E}=\{\mathbf{E}^{(i,j)}\}_{i,j=1}^{|\mathcal{V}|}, \mathbf{E}^{(i,j)} \in \mathbb{R}^w$. Graph embedding learning for static graphs is to create an embedding $\bm{z}{(v_i)}$ for each node $v_i$ following a specified aggregation rule such that the specific local topology and node intrinsic information can be captured, formally
\begin{equation*}
    \bm{z}{(v_i)}=\sum_{j,~\mathbf{A}^{(i,j)}=1} h(\text{msg}(\mathbf{F}^{(i)}, \mathbf{F}^{(j)}, \mathbf{E}^{(i,j)}),\mathbf{F}^{(i)}),
\end{equation*}
where $\text{msg}$ and $h$ are predefined or learnable functions working for message passing and aggregation, respectively.

\noindent\textbf{Dynamic graphs.}~Dynamic graphs can be broadly categorized into \emph{continuous-time dynamic graphs} and \emph{discrete-time dynamic graphs}~\citep{Kazemi2020JMLR}.
% A discrete-time dynamic graph is a chronological sequence of static graph snapshots that are regularly sampled with a fixed time interval, while a continuous-time dynamic graph consists of graph snapshots that are irregularly sampled. 
A continuous-time dynamic graph $\mathcal{G}_t = (\mathcal{V}_t, \mathcal{E}_t)$ consists of an evolving set of nodes $\mathcal{V}_t$, edges $\mathcal{E}_t$, and potentially node and edge attributes $\mathbf{F}_t$ and $\mathbf{E}_t$. A discrete-time dynamic graph is a sequence of observations from a continuous-time dynamic graph, $\{\mathcal{G}_{t_i}\}^N_{i=0}$.
Dynamic graphs exhibit structural dynamics arising from edge addition or deletion, node addition or deletion events, and node intrinsic dynamics due to transformations in node or edge features over time. As a result, the adjacency matrix $\mathbf{A}_t$ is also time varying. In this work, we embark on representation learning for continuous-time dynamic graphs with irregularly sampled observations. We start with undirected graphs ($\mathbf{A}_t$ is symmetric) without time-varying node attributes and edge features, then we discuss the extensions to more subtle graph structures. It is noteworthy that our proposed method can also handle tasks on discrete-time dynamic graphs in a seamless manner.

% ------------------------------------------------------------------
\subsection{Neural Differential Equations}
\label{sec:neural_diff_equ}

\noindent\textbf{Neural ordinary differential equations (Neural ODEs).}~Neural ordinary differential equations~\cite{Chen2018ODE} are the continuous-depth analogue to residual neural networks. Let $f_\theta: \bm{x} \rightarrow \bm{y}$ be a function mapping with some learnable parameters $\theta$, and $\zeta_\theta$ and $\ell_\theta$ are two linear maps. Neural ODEs are defined as
\begin{equation}
\label{eq:ode}
    \bm{z}_t = \bm{z}_0 + \int_0^t f_{\theta} (\bm{z}_s) \mathrm{d}s \quad \mathrm{and} \quad \bm{z}_0 = \zeta_\theta(\bm{x}),
\end{equation}
where $\bm{y} \approx \ell_\theta(\bm{z}_T)$ can be utilized to approximate the desired output. In this formula, the solution $\bm{z}_t$ is determined by the initial condition on $\bm{z}_0$ when $\theta$ has been learned. There exists no direct way to modify the trajectory given subsequent observations, let alone tackle structural dynamics in the data generation procedure, making the plain Neural ODEs not suitable for dynamic graph setups.

\noindent\textbf{Neural controlled differential equations (Neural CDEs).}~Neural controlled differential equations ~\cite{Kidger2020CDE} are the continuous-time analogue to recurrent neural networks and provide a natural method for modeling temporal dynamics with neural networks.

Provide an irregularly sampled time series $\bm{x}=((t_0, \bm{x}_{t_0}), (t_1, \bm{x}_{t_1}), \ldots, (t_N, \bm{x}_{t_N}))$, with each $t_k \in \mathbb{R}$ the time stamp of the observation $\bm{x}_{t_k} \in \mathbb{R}^v$ and $t_0<\cdot\cdot\cdot<t_N$. Let $X:[t_0, t_N] \rightarrow \mathbb{R}^{v+1}$ be a continuous function of bounded variation with knots at $t_0, \ldots, t_N$ such that $X_{t_k}=(t_k, \bm{x}_{t_k})$. Let $f_\theta: \mathbb{R}^w \rightarrow \mathbb{R}^{w \times (v+1)}$ and $\zeta_\theta: \mathbb{R}^{v+1} \rightarrow \mathbb{R}^w$ be neural networks depending on their own learnable parameters $\theta$. Then Neural CDEs are defined as
\begin{equation}
\label{eq:cde}
    \bm{z}_t = \bm{z}_{t_0} + \int^t_{t_0} f_{\theta}(\bm{z}_s) \mathrm{d}X_s \quad \mathrm{for}~t\in(t_0,t_N],
\end{equation}
where $\bm{z}_{t_0} = \zeta_\theta(t_0, \bm{x}_{t_0})$ and $\bm{z}_t$ is the solution of the CDE. A key difference from Neural ODEs is the interpolation of observations to form a continuous path $X_s$. This allows for the natural inclusion of time-varying data into the integration process, thereby enabling the trajectory of the system to adapt according to subsequent observations.

% ======================================================================= 
\section{Methodology}
\label{sec:method}
In this section, we first present the embedding learning problem under evolving graphs. Then we introduce our proposed differential model. After that, we provide the applications to several representative graph-related tasks (\eg~node attributes prediction, dynamic node classification, temporal link prediction).

% ------------------------------------------------------------------
\subsection{Problem Setup}
Consider a dynamic graph generated following an underlying continuous procedure that we only observe a sequence of irregularly sampled graph snapshots $\mathcal{G}=\{(t_0, G_{t_0}),\ldots,(t_N, G_{t_N})\}$, with each $t_k \in \mathbb{R}$ the time stamp of the observed graph $G_{t_k}$ and $t_0<\cdot\cdot\cdot <t_N$. Among these observations, a graph snapshot $G_{t_k}=\{\mathcal{V}, \mathcal{E}\}$ is comprised of nodes $\mathcal{V}=\{v_1,\ldots,v_{|\mathcal{V}|}\}$ and edges $\mathcal{E} \subseteq \mathcal{V}\times \mathcal{V}$ (we assume all snapshots share a common node set and edge set, and omit the subscript for simplicity). Commonly, we can represent the graph topological information for graph $G_{t_k}$ via a time-specified adjacency matrix $\mathbf{A}_{t_k}\in \mathbb{R}^{|\mathcal{V}| \times |\mathcal{V}|}$ that each interaction $e_{ij} \in \mathcal{E}$ is valued in $\mathbf{A}_{t_k}$ where $i,j \in \{1,\ldots,|\mathcal{V}|\}$. Our goal is to learn a non-linear dynamical system on the dynamic graph $\mathcal{G}$ based on the observations, formally the dynamics follow the form:
\begin{equation}
\label{eq:dynamic_graph}
    \mathbf{Z}_t = \mathbf{Z}_{t_0} + \int^t_{t_0} f(\mathbf{Z}_s)\mathrm{d}\mathbf{X}_s \quad \mathrm{for}~t\in(t_0,t_N],
\end{equation}
where $\mathbf{Z}_t=\{\bm{z}_t{(v_i)}\}_{i=1}^{|\mathcal{V}|}, \mathbf{Z}_t\in\mathbb{R}^{|\mathcal{V}| \times d}$ is output node embedding matrix, $\mathbf{X}$ is an input signal path defined on $[t_0,t_N]$ which comprises the evolving topology of $\mathcal{G}$. The subscript notation here refers to function evaluation over time. Combining the model with an assigning function $\tau:\mathbb{R}^d\rightarrow\mathbb{R}^c$ on $\mathcal{G}$, such that $\mathbf{Y}_t=\tau(\mathbf{Z}_t), \mathbf{Y}_t \in \mathbb{R}^{|\mathcal{V}| \times c}$, allows applications in several machine learning tasks, such as node attribute prediction~\cite{Gao2016Nature}, dynamic node classification~\cite{Kumar2019KDD}, temporal link prediction~\cite{Cong2023ICLR}~\etc~Very recently, some works incorporate graph convolutional networks with ODEs for the continuous inference~\cite{Zang2020KDD, Yan2024NN, Choi2022AAAI}, however, they degrade to an oversimplified setup where the neighborhood for nodes remains unchanged over time, making the proposed methods impractical for usage since the structural change could yield an unignorable effect on node embeddings. To make this problem solvable and the designed methods practical, we make the following assumption:

\begin{assumption}[Continuity]
The evolving path $\mathbf{z}:[t_0,t_N]$ $\rightarrow\mathbb{R}^d$ of each node embedding is absolutely continuous.
\end{assumption}

This continuity assumption is standard for enabling differential equations~\cite{Chen2018ODE, Kidger2020CDE} and widely used by current dynamic graph approaches \cite{Zang2020KDD, Wang2020CIKM, Yan2024NN, Choi2022AAAI}.

\begin{figure}[!tbp]
\vspace{0.2cm}
    \centering
    \includegraphics[width=0.94\linewidth]{./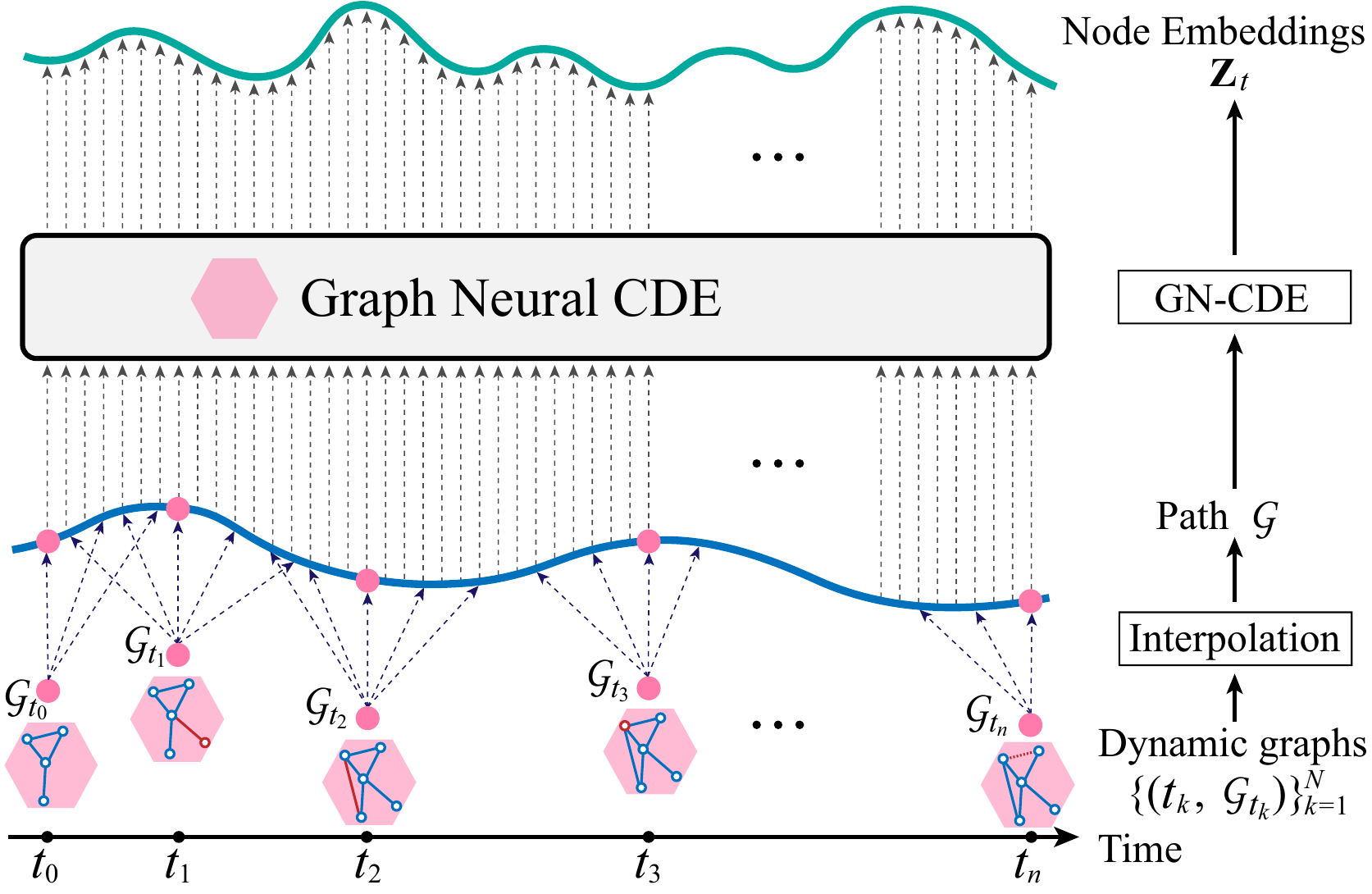}
    \vspace{-0.15cm}
    \caption{An overview of our proposed GN-CDE. In the canonical setting, dynamic graphs are characterized by an evolving adjacency matrix but can be extended to more sophisticated scenarios as discussed in Section~\ref{sec:method_expansion}.}
    \label{fig:frame}
\end{figure}

% ------------------------------------------------------------------
\subsection{GN-CDEs: Graph Neural Controlled Differential Equations}
\label{sec:gn_cde}
An illustration of the proposed GN-CDE framework can be found in Fig.~\ref{fig:frame}. Before introducing our differential equations, we need to prepare a continuously evolving path for graph structure first. Specifically, for a dynamic graph $G_{t_k}$ collected at time stamp $t_k$ endowed with adjacency matrix $\mathbf{A}_{t_k}$, we augment each interaction $e_{ij}$ in $\mathbf{A}_{t_k}$ by time stamp as $\hat{e}_{ij}=(t_k, e_{ij}) \in \mathbb{R}^2$ where $i,j \in \{1,\ldots,|\mathcal{V}|\}$  such that all these processed interactions can be represented by a time augmented adjacency matrix as $\hat{\mathbf{A}}_{t_k}=(t_k, \mathbf{A}_{t_k}) \in \mathbb{R}^{|\mathcal{V}| \times |\mathcal{V}|\times 2}$. After that, we interpolate each possible interaction among two nodes independently utilizing the discrete observations to form a continuous path, this can be represented as $\hat{\mathbf{A}}:[t_0, t_N] \rightarrow \mathbb{R}^{|\mathcal{V}| \times |\mathcal{V}|\times 2}$ such that $\hat{\mathbf{A}}_{t_k}=(t_k, \mathbf{A}_{t_k})$. In this paper, we assume $\hat{\mathbf{A}}$ to be piecewise continuously differentiable with bounded second derivative, thus many interpolation schemes can be employed~\cite{Morrill2022TMLR}.

Then, let $\zeta_{\theta}: \mathbb{R}^{|\mathcal{V}| \times |\mathcal{V}|\times 2} \rightarrow \mathbb{R}^{|\mathcal{V}| \times d}$ and $f_{\theta}: \mathbb{R}^{|\mathcal{V}| \times d} \times \mathbb{R}^{|\mathcal{V}| \times |\mathcal{V}|} \rightarrow \mathbb{R}^{|\mathcal{V}| \times d} \times \mathbb{R}^{|\mathcal{V}| \times |\mathcal{V}|\times 2}$ be two graph neural networks. We define our controlled differential equation for dynamic graphs as follows
\begin{equation}
\label{eq:gn_cde}
    \mathbf{Z}_t = \mathbf{Z}_{t_0} + \int^t_{t_0} f_{\theta}(\mathbf{Z}_s, \mathbf{A}_s)\mathrm{d}\hat{\mathbf{A}}_s \quad \mathrm{for}~t\in(t_0,t_N],
\end{equation}
where $\mathbf{Z}_{t_0}=\zeta_\theta(t_0, \mathbf{A}_{t_0})$ is treated as the initial value to avoid translational invariance. One can utilize another linear function $\ell_\theta$ to acquire the final prediction as $\tilde{\mathbf{Y}}_t=\ell_\theta(\mathbf{Z}_t), \tilde{\mathbf{Y}}_t\in\mathbb{R}^{|\mathcal{V}| \times c}$. The notation ``$f_{\theta}(\mathbf{Z}_s, \mathbf{A}_s) \mathrm{d} \hat{\mathbf{A}}_s$'' in Eq.~\eqref{eq:gn_cde} represents a tensor contraction between the final three axes of $f_{\theta}(\mathbf{Z}_s, \mathbf{A}_s)$ and all the axes of $\mathrm{d}\hat{\mathbf{A}}_s$. Our formula differs from the standard Neural CDE model~\cite{Kidger2020CDE} by explicitly capturing the causal effect of structural dynamics through the constructed graph path $\hat{\mathbf{A}}$, making it more suitable for dynamic graph scenarios, while Neural CDEs primarily emphasize sequential dependencies among observations rather than topology-driven effects. Under this modification, $\mathbf{A}_s$ in $f_{\theta}(\mathbf{Z}_s, \mathbf{A}_s)$ can bias the learned dynamics towards solutions conditioned on the current graph structure, and the derivative $\mathrm{d} \hat{\mathbf{A}}_s$ can indicate the magnitude and direction of instantaneous structural changes.

\begin{theorem}
\label{theo:existence_and_uniqueness}
    If $f_{\theta}$ is globally Lipschitz continuous and $\hat{\mathbf{A}}_t$ is a piecewise continuously differentiable interpolation, then the solution $\mathbf{Z}_t$ of \ref{eq:gn_cde} exhibits global existence and uniqueness.
\end{theorem}
\begin{proof}
    The proof is a straightforward application of the Picard–Lindelöf theorem~\citep{Coddington1955ODE}.
\end{proof}
Let $f_{\theta}$ be implemented as a Graph Neural Network of the form $f_{\theta}(\mathbf{Z}_s, \mathbf{A}_s)=\sigma(\mathbf{A}_s \mathbf{Z}_s^{(l)} \mathbf{W}^{(l)})$, where $\mathbf{W}^{(l)}$ denotes the parameters for $l$-th GCN layer and $\sigma$ is an activation function with Lipschitz constant $1$ (\eg~ReLU). Since each operation in this composition is Lipschitz continuous, it follows that $f_{\theta}$ is globally Lipschitz continuous. We elaborate on theoretical comparisons of different interpolation schemes for GN-CDE in Appendix~C. In practice, we can leverage a regularized adjacency matrix of $\mathbf{A}_s$ to stabilize the algorithm learning~\cite{Welling2016ICLR}.

\noindent\textbf{Evaluating.}~Provided $\hat{\mathbf{A}}$ as piecewise continuously differentiable, Eq.~\eqref{eq:gn_cde} can be rewritten as
\begin{equation}
\label{eq:gn_cde_deri}
    \mathbf{Z}_t = \mathbf{Z}_{t_0} + \int^t_{t_0} f_{\theta}(\mathbf{Z}_s, \mathbf{A}_s)\frac{\mathrm{d}\hat{\mathbf{A}}_s}{\mathrm{d}s}\mathrm{d}s \quad \mathrm{for}~t\in(t_0,t_N],
\end{equation}
where $\mathbf{Z}_{t_0}=\zeta_\theta(t_0, \mathbf{A}_{t_0})$. This model can be interpreted as an ordinary differential equation by taking ``$f_{\theta}(\mathbf{Z}_s, \mathbf{A}_s)\frac{\mathrm{d}\hat{\mathbf{A}}_s}{\mathrm{d}s}$'' as a whole, and one can solve it using the same techniques for Neural ODEs \citep{Kidger2020CDE}. The continuous inference procedure of GN-CDE using an ODE solver is depicted in Algorithm~\ref{alg:inference}.

\begin{algorithm}[!tbp]
\caption{Continuous inference of GN-CDE algorithm}
\label{alg:inference}
\begin{algorithmic}
   \STATE {\bfseries Input:} Sequentially observed topological structures of a dynamic graph $\{(t_0, \mathbf{A}_{t_0}),\ldots,(t_N, \mathbf{A}_{t_N})\}$, initial function $\zeta_\theta$, vector field $f_\theta$ and decoder $\ell_\theta$.
   
   \STATE {\bfseries Initializing:}
   \STATE ~~~$\triangleright$ $\hat{\mathbf{A}}$: Interpolate the time-augmented adjacency matrix;
   \STATE ~~~$\triangleright$ $\mathbf{Z}_{t_0} \leftarrow \zeta_\theta(t_0, \mathbf{A}_{t_0})$; 
   
   \STATE {\bfseries Continuously inferring:}
   \STATE ~~~$\triangleright$ $\mathbf{Z}_{t} \leftarrow \textbf{\rm{ODESolve}}(\mathbf{Z}_{t_0}, \hat{\mathbf{A}}, t_0, t_N, f_\theta) $ following Eq.~\eqref{eq:gn_cde_deri};
   \STATE ~~~$\triangleright$ $\Tilde{\mathbf{Y}}_t \leftarrow \ell_\theta(\mathbf{Z}_t)$; 
   \STATE {\bfseries Return:} $\mathbf{Z}_{t}$, $\Tilde{\mathbf{Y}}_t$
\end{algorithmic}
\end{algorithm}

% ------------------------------------------------------------------
\subsection{Properties}
\noindent\emph{Robustness to missing values.}~GN-CDEs are capable of processing partially observed data. This is because each channel of input is independently interpolated between observations, allowing $\hat{\bm{A}}_s$ to be constructed in exactly the same manner as before.

\noindent\emph{Calibration.}~The model supports calibration through the incorporation of additional observations at intermediate time points. This property is inherited from Neural CDEs~\cite{Kidger2020CDE}, which naturally allow for trajectory refinement as new data becomes available.

\noindent\emph{Decoupling of forward passes from number of observations.}: In contrast to recurrent neural networks, the number of forward passes through the vector field of a GN-CDE is not determined by the number of observations in the time series, but by the choice of differential equation solver. This makes our model robust to oversampled data.

\noindent\emph{Memory-efficient adjoint back-propagation.}~Our model can continuously incorporate incoming data without interrupting the differential equation. As a result, memory-efficient adjoint backpropagation techniques may be employed for model training.

% ------------------------------------------------------------------
\subsection{Comparison to Alternative Models}
In this section, we compare and discuss our framework with two alternatives that also combine dynamic graph structure with differential equations.

\vspace{0.3cm}
\noindent\raisebox{.5pt}{\textcircled{\raisebox{-.9pt} {1}}}~Neural CDE
\vspace{0.3cm}

An alternative of Eq.~\eqref{eq:gn_cde} is implementing the vector field without $\mathbf{A}_s$ as input, which follows the standard Neural CDE presented in Eq.~\eqref{eq:cde}. We formulize this as
\begin{equation}
\label{eq:gn_cde_abb2}
    \mathbf{Z}_t = \mathbf{Z}_{t_0} + \int^t_{t_0} f_{\theta}(\mathbf{Z}_s)\mathrm{d}\hat{\mathbf{A}}_s \quad \mathrm{for}~t\in(t_0,t_N],
\end{equation}
where $\mathbf{Z}_{t_0}=\zeta_\theta(t_0, \mathbf{A}_{t_0})$. This model emphasizes the linear dependency on $\mathrm{d}\hat{\mathbf{A}}_s$. 

\begin{theorem}
\label{theo:gn_cde_linear}
Any equation of the form $\mathbf{Z}_t = \mathbf{Z}_{t_0} + \int^t_{t_0} f_{\theta}(\mathbf{Z}_s)\mathrm{d}\hat{\mathbf{A}}_s$ can be represented exactly by a Graph Neural Controlled Differential Equation of the form $\mathbf{Z}_t = \mathbf{Z}_{t_0} + \int^t_{t_0} f_{\theta}(\mathbf{Z}_s, \mathbf{A}_s)\mathrm{d}\hat{\mathbf{A}}_s$ and vice versa.
\end{theorem}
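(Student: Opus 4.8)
The plan is to prove the two implications in turn, observing that one is essentially trivial while the other carries all the content. The forward implication---that every equation of the form $\mathbf{Z}_t = \mathbf{Z}_{t_0} + \int^t_{t_0} f_{\theta}(\mathbf{Z}_s)\mathrm{d}\hat{\mathbf{A}}_s$ is a special case of $\mathbf{Z}_t = \mathbf{Z}_{t_0} + \int^t_{t_0} f_{\theta}(\mathbf{Z}_s, \mathbf{A}_s)\mathrm{d}\hat{\mathbf{A}}_s$---follows immediately: given $f_\theta$ one simply defines a new vector field $\tilde{f}_\theta(\mathbf{Z}_s,\mathbf{A}_s) := f_\theta(\mathbf{Z}_s)$ that ignores its second argument, so the two integral equations coincide identically and share the same solution guaranteed by the global existence and uniqueness result established above.

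The substance lies in the converse. The key observation is that the adjacency $\mathbf{A}_s$ is not an external quantity but a sub-block of the driving path itself: since $\hat{\mathbf{A}}_s = (s, \mathbf{A}_s)$, the matrix $\mathbf{A}_s$ is recovered from $\hat{\mathbf{A}}_s$ by a fixed linear projection, and its increment $\mathrm{d}\mathbf{A}_s$ is exactly the adjacency channel of $\mathrm{d}\hat{\mathbf{A}}_s$. I would therefore absorb $\mathbf{A}_s$ into the hidden state by \emph{state augmentation}. Define the augmented state $\mathbf{W}_s := (\mathbf{Z}_s, \mathbf{A}_s)$ and a block vector field $g_\theta(\mathbf{W}_s)$ whose $\mathbf{Z}$-block equals $f_\theta(\mathbf{Z}_s,\mathbf{A}_s)$---now a function of the state alone, because $\mathbf{A}_s$ is a state component---and whose $\mathbf{A}$-block is the constant projection $\Pi$ extracting the adjacency channel, so that contracting $g_\theta$ against $\mathrm{d}\hat{\mathbf{A}}_s$ reproduces $\mathrm{d}\mathbf{A}_s$ in the second slot. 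The augmented equation $\mathbf{W}_t = \mathbf{W}_{t_0} + \int^t_{t_0} g_\theta(\mathbf{W}_s)\,\mathrm{d}\hat{\mathbf{A}}_s$ with $\mathbf{W}_{t_0} = (\zeta_\theta(t_0,\mathbf{A}_{t_0}),\,\mathbf{A}_{t_0})$ is then of the required form in which the vector field depends on the state only.

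It remains to argue that the $\mathbf{Z}$-component of the augmented solution coincides with the original Form-B solution. I would do this by first certifying that the $\mathbf{A}$-component of $\mathbf{W}_s$ is \emph{consistent}, i.e. equal to the control's adjacency block at every $s$: both start at $\mathbf{A}_{t_0}$ and both have increment $\mathrm{d}\mathbf{A}_s$, so by the fundamental theorem of calculus---equivalently by uniqueness of the solution---they agree throughout $[t_0,t_N]$. Feeding this identity back into the $\mathbf{Z}$-block turns $g_\theta$ into exactly $f_\theta(\mathbf{Z}_s,\mathbf{A}_s)$, whence the $\mathbf{Z}$-component solves the Form-B equation and the two coincide by uniqueness.

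I expect the main obstacle to be bookkeeping and regularity rather than ideas. One must check that $g_\theta$ remains inside the admissible class: the $\mathbf{A}$-block is linear (hence globally Lipschitz) and the $\mathbf{Z}$-block inherits the Lipschitz continuity of $f_\theta$, so global Lipschitzness---and with it the hypothesis of the existence and uniqueness result---is preserved; one must also verify that the tensor shapes of the augmented vector field match the matrix-matrix product with $\mathrm{d}\hat{\mathbf{A}}_s$. The only genuinely delicate step is establishing the consistency of the $\mathbf{A}$-component, since everything else follows once $\mathbf{A}_s$ is shown to be faithfully tracked inside the state.
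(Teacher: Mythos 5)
Your proposal is correct and follows essentially the same route as the paper: the paper's proof also augments the state to $\beta_s = \begin{bmatrix} \mathbf{Z}_s \\ \mathbf{A}_s \end{bmatrix}$ and rewrites the GN-CDE as $\beta_t = \beta_{t_0} + \int_{t_0}^t \tilde{f}_\theta(\beta_s)\,\mathrm{d}\hat{\mathbf{A}}_s$ with a vector field depending on the state alone, which is exactly your construction of $g_\theta$ with the projection block (the paper writes this block simply as ``$1$''). Your version is in fact somewhat more careful, since you make explicit the trivial direction, the consistency of the tracked adjacency component, and the preservation of the Lipschitz hypothesis, all of which the paper leaves implicit.
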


\begin{proof}
Here we sketch the proof idea. Since the vector field $f_\theta$ in $\mathbf{Z}_t = \mathbf{Z}_{t_0} + \int^t_{t_0} f_{\theta}(\mathbf{Z}_s, \mathbf{A}_s)\mathrm{d}\hat{\mathbf{A}}_s$ takes both the $\mathbf{Z}_s$ and $\mathbf{A}_s$ as input, we can construct a stacked input $\bm\beta_s = \begin{bmatrix} \mathbf{Z}_s \\ \mathbf{A}_s\end{bmatrix}$ for alternative such that the newly built controlled differential equation on $\bm\beta_s$ is equivalent to the Neural CDE formulation presented in Eq.~\eqref{eq:gn_cde_abb2}. The full proof of Theorem~\ref{theo:gn_cde_linear} is detailed in Appendix~A.1.
\end{proof}
\vspace{0.005cm}
%Despite the equality of these two equations under some circumstances, GN-CDE with non-linear dependency (in Eq.~\ref{eq:gn_cde}) shows better experimental results compared to the linear dependency (in Eq.~\ref{eq:gn_cde_abb2}), we conjecture the reason that explicitly incorporating dynamic graph structure into the vector field can better steer the information flow among nodes for a specified time.  

Although the dynamic graph can be learned by these two forms of CDE according to Theorem~\ref{theo:gn_cde_linear}, they own different preferences during model learning stage. In the experimental part, we find that GN-CDE with non-linear dependency (as shown in Eq.~\eqref{eq:gn_cde}) performs better compared to the linear dependency variant (Eq.~\eqref{eq:gn_cde_abb2}). We conjecture that this is due to the fact that explicitly incorporating dynamic graph structure into the vector field allows for more precise control of information flow among nodes over time.

\vspace{0.3cm}
\noindent\raisebox{.5pt}{\textcircled{\raisebox{-.9pt} {1}}}~Neural ODE
\vspace{0.3cm}

Another alternative to Eq.~\eqref{eq:gn_cde} could be directly incorporating the graph structure into the vector field of Neural ODEs presented in Eq.~\eqref{eq:ode} and defining the graph neural ODE model as
\begin{equation}
\label{eq:gn_cde_abb1}
    \mathbf{Z}_t = \mathbf{Z}_{t_0} + \int^t_{t_0} f_{\theta}(\mathbf{Z}_s, \mathbf{A}_{\lfloor s \rfloor})\mathrm{d}s \quad \mathrm{for}~t\in(t_0,t_N],
\end{equation}
where $\mathbf{Z}_{t_0}=\zeta_\theta(t_0, \mathbf{A}_{t_0})$, $\mathbf{A}_{\lfloor s \rfloor}=\mathbf{A}_{t_k}$ if $t_k \leq s < t_{k+1}$.

\begin{theorem}
\label{theo:gn_ode}
Any equation of the form $\mathbf{Z}_t = \mathbf{Z}_{t_0} + \int^t_{t_0} f_{\theta}(\mathbf{Z}_s, \mathbf{A}_{\lfloor s \rfloor})\mathrm{d}s$ may be represented exactly by a Graph Neural Controlled Differential Equation of the form $\mathbf{Z}_t = \mathbf{Z}_{t_0} + \int^t_{t_0} f_{\theta}(\mathbf{Z}_s, \mathbf{A}_s)\mathrm{d}\hat{\mathbf{A}}_s$. However, the converse statement is not true.
\end{theorem}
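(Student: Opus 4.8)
The statement splits into a representation claim (every graph neural ODE is a GN-CDE) and a separation claim (the converse fails). The plan is to treat these two directions separately, exploiting the structure of the time-augmented driving path $\hat{\mathbf{A}}_s=(s,\mathbf{A}_s)$, whose first (time) channel always has unit derivative while its second (adjacency) channel carries the increment $\mathrm{d}\mathbf{A}_s$.

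\textbf{Forward direction.} I would start from the evaluated form of the GN-CDE in Eq.~\ref{eq:gn_cde_deri}, in which the integrand is the matrix product of the vector field with $\mathrm{d}\hat{\mathbf{A}}_s/\mathrm{d}s$. The idea is the standard ``time as a channel'' embedding: given the ODE field $f_\theta$ of Eq.~\ref{eq:gn_cde_abb1}, I construct a GN-CDE vector field that places $f_\theta$ in the slot contracting against the time channel and places zeros in the slot contracting against the adjacency channel. Because $\mathrm{d}s/\mathrm{d}s\equiv 1$, the product collapses to $f_\theta(\mathbf{Z}_s,\mathbf{A}_s)\,\mathrm{d}s$, so the GN-CDE reduces exactly to $\mathbf{Z}_{t_0}+\int_{t_0}^t f_\theta(\mathbf{Z}_s,\mathbf{A}_s)\,\mathrm{d}s$. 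The only remaining gap is that the target ODE feeds the piecewise-constant $\mathbf{A}_{\lfloor s\rfloor}$ into its field, whereas the CDE feeds the interpolant $\mathbf{A}_s$ as the second argument of the vector field; I close this by selecting the piecewise-constant interpolation for the adjacency channel, so that $\mathbf{A}_s=\mathbf{A}_{\lfloor s\rfloor}$ as an input. Since the constructed field assigns zero weight to the adjacency channel, the jump increments of that channel never enter the integral, the discontinuities are harmless, and the two solutions coincide on $[t_0,t_N]$.

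\textbf{Converse direction.} For the separation I would exhibit a single GN-CDE that no ODE of the form in Eq.~\ref{eq:gn_cde_abb1} can reproduce. The natural candidate is the mirror image of the previous construction: a vector field that places a nonzero field $h_\theta$ on the adjacency channel and zero on the time channel, giving $\mathbf{Z}_t=\mathbf{Z}_{t_0}+\int_{t_0}^t h_\theta(\mathbf{Z}_s,\mathbf{A}_s)\,\mathrm{d}\mathbf{A}_s$. The decisive property is that this Riemann--Stieltjes integral is invariant under monotone reparametrisation of time: rescaling the observation times while keeping the adjacency values fixed leaves the solution trajectory, as a function of the adjacency value, unchanged, and in particular leaves the terminal embedding unchanged. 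By contrast, an ODE integrates against $\mathrm{d}s$ and only ever sees the left-endpoint value $\mathbf{A}_{\lfloor s\rfloor}$ on each inter-observation interval, so over a single interval its update is $\int f_\theta(\mathbf{Z}_s,\mathbf{A}_{\lfloor s\rfloor})\,\mathrm{d}s$, whose value scales with the length of that interval. I would then pick an input on which the chosen GN-CDE evolves nontrivially and stretch the inter-observation times by a factor $\lambda$: the GN-CDE output is constant in $\lambda$, whereas any candidate ODE either has $f_\theta$ vanishing on all visited states (forcing a frozen, hence non-matching, trajectory) or has an output that genuinely varies with $\lambda$. For a suitable $\lambda$ the two disagree, contradicting exact representation.

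\textbf{Main obstacle.} The forward direction is essentially bookkeeping once the time-channel trick is in place; the real work is the converse. The delicate point there is to rule out \emph{all} admissible ODEs simultaneously rather than a fixed one, and to do so despite the asymmetry that the ODE only accesses the step function $\mathbf{A}_{\lfloor s\rfloor}$ while the GN-CDE reacts to the full increment $\mathrm{d}\mathbf{A}_s$. Reparametrisation invariance of the Stieltjes integral is the lever I expect to carry this step, since it converts the informal statement ``the CDE responds to structural change whereas the ODE responds to elapsed time'' into a clean quantitative contradiction.
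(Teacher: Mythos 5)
Your proposal is correct in substance but takes a genuinely different route from the paper's. The paper proves the forward inclusion by a two-step reduction through the intermediate linear-dependency model of Eq.~\ref{eq:gn_cde_abb2}: it first invokes Theorem~\ref{theo:gn_cde_linear} to pass from $\int f_\theta(\mathbf{Z}_s)\,\mathrm{d}\hat{\mathbf{A}}_s$ to $\int f_\theta(\mathbf{Z}_s,\mathbf{A}_s)\,\mathrm{d}\hat{\mathbf{A}}_s$, and then cites Theorem 3.3 of \citet{Kidger2020CDE} (the stacking, time-as-a-channel construction) to embed the ODE form into that intermediate CDE; the failure of the converse is never argued in the paper at all, only inherited from Kidger's statement. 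You instead give a one-step direct construction---place $f_\theta$ against the time channel of $\hat{\mathbf{A}}$ and zero against the adjacency channels---which works precisely because the GN-CDE vector field already receives $\mathbf{A}_s$ as an input, so no stacking and no intermediate model are needed; and, more importantly, you actually prove the converse, via reparametrisation invariance of the Stieltjes integral for a purely adjacency-driven field versus the interval-length dependence of any ODE. That separation argument is sound: with uniform stretching the standard interpolants commute with the time change, the CDE terminal value is $\lambda$-invariant, and letting $\lambda \to 0^{+}$ forces any exactly matching ODE to be frozen at $\mathbf{Z}_{t_0}$, contradicting a nontrivially evolving CDE such as $\mathrm{d}\mathbf{Z}=\mathrm{d}\mathbf{A}_s$. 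What the paper's route buys is brevity given Theorem~\ref{theo:gn_cde_linear} and the external citation; what yours buys is self-containedness and an explicit witness for the half of the statement the paper leaves unproved.

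One technical wart in your forward direction: taking the adjacency channels' interpolation to be piecewise constant contradicts the paper's standing assumption that $\hat{\mathbf{A}}$ is a continuous, piecewise twice continuously differentiable path, so $\mathrm{d}\hat{\mathbf{A}}_s$ is not well defined there within the stated framework. The clean fix is rectilinear control (Appendix~\ref{sec:app_inter}): it is continuous, it holds the adjacency channels at $\mathbf{A}_{\lfloor s\rfloor}$ exactly on the segments where time advances (where your field is active), and the adjacency-advancing segments contribute nothing because your field is zero against those channels. Note that the paper's own proof quietly contains the same mismatch---Kidger's alternative ODE model feeds the continuous interpolant, not the piecewise-constant $\mathbf{A}_{\lfloor s\rfloor}$, into the vector field---so once repaired, your explicit construction is actually more careful on this point than the citation it replaces.
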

\begin{proof}
%See Appendix~\ref{sec:app_proof_gn_ode}.
Theorem 3 allows us to identify any equation of the form $\mathbf{Z}_t = \mathbf{Z}_{t_0} + \int^t_{t_0} f_{\theta}(\mathbf{Z}_s, \mathbf{A}_s)\mathrm{d}\hat{\mathbf{A}}_s$ with an equivalent equation of the form $\mathbf{Z}_t = \mathbf{Z}_{t_0} + \int^t_{t_0} f_{\theta}(\mathbf{Z}_s)\mathrm{d}\hat{\mathbf{A}}_s$. The result then follows as a consequence of Theorem C.1 in \cite{Kidger2020CDE}.
\end{proof}

According to Theorem~\ref{theo:gn_ode}, although the Neural ODE model can also take the dynamic graph structure into the vector field computational procedure, its representation ability is inferior to our proposed GN-CDE model.

% ------------------------------------------------------------------
\subsection{Approximation of GN-CDE}
\label{sec:simplification}
Directly implementing Eq.~\eqref{eq:gn_cde_deri} by following \cite{Kidger2020CDE} will incur an undesirable computational burden due to the high dimensional tensor contraction. To ensure the scalability of our model to large neural networks, we incorporate approximations which reduce the computational complexity by leveraging a message passing mechanism. First, assuming a global time shared by all nodes, the time-augmented adjacency matrix can be simplified to $\mathrm{d}\Hat{\mathbf{A}}_s=[\mathrm{d}\mathbf{A}_s,\mathrm{d}s]$. Second, we propose to fuse $\mathrm{d}\mathbf{A}_s$ together with $\mathbf{A}_s$ into the vector field using a transformation matrix to produce a new adjacency matrix $\Tilde{\mathbf{A}}_s$, that contains both the current graph structure and the instantaneous structural change. This yields two advantages: 1) the information diffusion procedure by learnable parameters can be approximated via our adjusted adjacency matrix $\Tilde{\mathbf{A}}_s$;~2) the dimension of output for $f_\theta$ can be significantly reduced, from $\mathbb{R}^{|\mathcal{V}| \times d} \times \mathbb{R}^{|\mathcal{V}| \times |\mathcal{V}|+1}$ to $\mathbb{R}^{|\mathcal{V}| \times d}$. Formulaically, for a vector field parameterized by a $L$-layers graph neural network, the approximated equation that can be implemented much more efficiently as follows
\begin{equation}
\label{eq:gn_cde_imp}
    \mathbf{Z}_t = \mathbf{Z}_{t_0} + \int^t_{t_0} \sigma \Big( \Tilde{\mathbf{A}}_s \mathbf{Z}_s^{(L)} \mathbf{W}^{(L)}\Big)\mathrm{d}s \quad \mathrm{for}~t\in(t_0,t_N],
\end{equation}
where $\Tilde{\mathbf{A}}_s=\mathbf{W}^{(DR1)}\begin{bmatrix} \hat{\mathbf{A}}_s \\ \frac{\mathrm{d}\hat{\mathbf{A}}_s}{\mathrm{d}s} \end{bmatrix} \mathbf{W}^{(DR2)}$, and $\mathbf{W}^{(DR1)} \in \mathbb{R}^{|\mathcal{V}| \times |\mathcal{V}|}$ and $\mathbf{W}^{(DR2)} \in \mathbb{R}^{2|\mathcal{V}| \times |\mathcal{V}|}$ are transformation matrices for the fusion. Additionally, $\mathbf{Z}_s^{(L)}$ is acquired iteratively by following the rule: $\mathbf{Z}_s^{(l)} = \sigma\big(\Tilde{\mathbf{A}}_s \mathbf{Z}_s^{(l-1)} \mathbf{W}^{(l-1)}\big)$ for $l\in\{1, ..., L\}$, and $\sigma$ is ReLU activation function.

\begin{theorem}\label{theo:gn_cde_approx}
    Given a sufficient number of copies, \eqref{eq:gn_cde_imp} has the same expressivity as a Graph Neural Controlled Differential Equation model with the form $\mathbf{Z}_t = \mathbf{Z}_{t_0} + \int^t_{t_0} f_{\theta}(\mathbf{Z}_s, \mathbf{A}_s)\mathrm{d}\hat{\mathbf{A}}_s$.
\end{theorem}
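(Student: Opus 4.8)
The plan is to make the informal phrase ``valid approximation'' precise as a two-part claim: (i) each structural simplification carrying Eq.~\ref{eq:gn_cde_deri} to Eq.~\ref{eq:gn_cde_imp} preserves the vector field up to a controllable error, and (ii) uniform closeness of the vector fields on the compact set swept out by the trajectory transfers to uniform closeness of the solutions $\mathbf{Z}_t$. I would begin by using the assumption that $\hat{\mathbf{A}}$ is piecewise twice continuously differentiable to pass to the ODE form Eq.~\ref{eq:gn_cde_deri}, then write $\frac{\mathrm{d}\hat{\mathbf{A}}_s}{\mathrm{d}s} = \left(1, \frac{\mathrm{d}\mathbf{A}_s}{\mathrm{d}s}\right)$ and split the matrix-matrix product $f_{\theta}(\mathbf{Z}_s,\mathbf{A}_s)\frac{\mathrm{d}\hat{\mathbf{A}}_s}{\mathrm{d}s}$ into its time-channel part and its adjacency-channel part.

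The first step is the \emph{channel collapse}. Under the homogeneity and isotropy of the bare graph asserted in the text, the time-channel contribution is identical across all node pairs, so it acts as a shared scalar drift that can be folded into the adjacency-channel term; formally, I would show the tensor contraction against $\mathrm{d}\hat{\mathbf{A}}_s \in \mathbb{R}^{|\mathcal{V}| \times |\mathcal{V}| \times 2}$ reduces to a contraction against the single matrix $\mathrm{d}\mathbf{A}_s \in \mathbb{R}^{|\mathcal{V}| \times |\mathcal{V}|}$, halving the required output of the vector field from $\mathbb{R}^{|\mathcal{V}| \times d} \times \mathbb{R}^{|\mathcal{V}| \times |\mathcal{V}| \times 2}$ to $\mathbb{R}^{|\mathcal{V}| \times d}$. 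The second step is the \emph{fusion}: since $\mathbf{A}_s$ and its rate of change $\frac{\mathrm{d}\mathbf{A}_s}{\mathrm{d}s}$ now both enter the vector field, I would stack them and apply the learnable map $\mathbf{W}^{(DR)}$ to obtain $\tilde{\mathbf{A}}_s = [\mathbf{A}_s, \frac{\mathrm{d}\mathbf{A}_s}{\mathrm{d}s}]\mathbf{W}^{(DR)}$, arguing that this single propagation matrix retains the information in the pair $(\mathbf{A}_s, \frac{\mathrm{d}\mathbf{A}_s}{\mathrm{d}s})$ needed to encode the instantaneous structural update.

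The final step couples expressivity with stability. I would appeal to the universal approximation property of graph neural networks to establish that, for sufficiently many layers $L$ and suitable $\{\mathbf{W}^{(l)}\}$ and $\mathbf{W}^{(DR)}$, the stacked GCN vector field $\sigma(\tilde{\mathbf{A}}_s \mathbf{Z}_s^{(L)} \mathbf{W}^{(L)})$ approximates the reduced target vector field uniformly on the relevant compact domain. Because $\sigma$ is $1$-Lipschitz and the regularized propagation matrices are bounded, both the original and the approximate vector fields are globally Lipschitz, so the global existence-uniqueness result established above applies to Eq.~\ref{eq:gn_cde_imp}; a Gr\"onwall inequality then converts a uniform $\varepsilon$-bound on the vector fields into a bound of order $\varepsilon\, e^{L(t-t_0)}$ on $\sup_t \|\mathbf{Z}_t^{\mathrm{approx}} - \mathbf{Z}_t\|$, which is the desired validity.

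The main obstacle I anticipate is the second and third steps taken together: the original dynamics contract the vector field \emph{linearly} against $\mathrm{d}\hat{\mathbf{A}}_s$, whereas Eq.~\ref{eq:gn_cde_imp} absorbs the structural derivative inside the propagation matrix and then applies a nonlinearity, so these are structurally distinct operations rather than algebraically equal ones. Consequently the result cannot be an exact representation (unlike Theorems~\ref{theo:gn_ode} and~\ref{theo:gn_cde_linear}) and must instead be an existence-of-approximating-weights statement; making the universal-approximation invocation rigorous requires fixing the function class, verifying that the target is continuous on a compact domain, and separately controlling the error introduced by the homogeneity-based channel collapse, which is a modeling simplification rather than an identity. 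I would therefore phrase the theorem as an $\varepsilon$-approximation guarantee and concentrate the technical effort on bounding these two error sources so that the Gr\"onwall step closes.
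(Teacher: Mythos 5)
Your proposal follows the same skeleton as the paper's own derivation---pass to the ODE form of Eq.~\ref{eq:gn_cde_deri}, collapse the time channel so that $\mathrm{d}\hat{\mathbf{A}}_s$ is replaced by $\frac{\mathrm{d}\mathbf{A}_s}{\mathrm{d}s}\mathrm{d}s$, fuse $\mathbf{A}_s$ with $\frac{\mathrm{d}\mathbf{A}_s}{\mathrm{d}s}$ through $\mathbf{W}^{(DR)}$ to form $\tilde{\mathbf{A}}_s$, and invoke the universal approximation property of GNNs \citep{Scarselli2008TNN}---but it departs from the paper at the two points where the paper's argument is weakest, and in both cases your version is the stronger one. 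First, the paper bridges the linear contraction and the fused nonlinear form by an intermediate algebraic manipulation you do not use: it moves the activation outside-in, replacing $\sigma(\mathbf{A}_s \mathbf{Z}_s^{(L)}\mathbf{W}^{(L)})\mathbf{W}^{(\mathrm{BR})}\frac{\mathrm{d}\mathbf{A}_s}{\mathrm{d}s}$ by $\sigma(\mathbf{A}_s \mathbf{Z}_s^{(L)}\mathbf{W}^{(L)}\mathbf{W}^{(\mathrm{BR})}\frac{\mathrm{d}\mathbf{A}_s}{\mathrm{d}s})$ (itself not an identity), merges the weights into $\mathbf{W}^{(\mathrm{LBR})}$, and then argues $\sigma(\tilde{\mathbf{A}}_s\mathbf{Z}_s^{(L)}\mathbf{W}^{(L)}) \approx \sigma(\mathbf{A}_s\mathbf{Z}_s^{(L)}\mathbf{W}^{(\mathrm{LBR})}\frac{\mathrm{d}\mathbf{A}_s}{\mathrm{d}s})$ on the grounds that both sides are universal approximators and that structural changes are sparse, so $\tilde{\mathbf{A}}_s$ roughly preserves $\mathbf{A}_s$. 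You instead identify exactly this linear-contraction-versus-nonlinearity mismatch as the central obstacle and treat the whole theorem as an existence-of-approximating-weights statement; that is the correct reading, since ``both classes are universal, hence these two particular members are close'' is not a valid inference as the paper states it. Second, you add a step the paper omits entirely: a Gr\"onwall argument converting a uniform $\varepsilon$-bound between the two vector fields on the compact range of the trajectory into a bound on $\sup_t\|\mathbf{Z}_t^{\mathrm{approx}}-\mathbf{Z}_t\|$, which is what ``Eq.~\ref{eq:gn_cde_imp} approximates the model of Eq.~\ref{eq:gn_cde}'' should actually mean; the paper stops at approximate equality of integrands and never transfers this to the solutions, whereas your stability step (available because both fields are globally Lipschitz, matching Theorem 1) is what closes that gap. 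In short: same decomposition, but your route repairs the two loose joints in the paper's proof, at the cost of having to separately bound the error of the homogeneity-based channel collapse, which the paper glosses over entirely.
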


\begin{proof}
    The full proof of Theorem~\ref{theo:gn_cde_approx} is deferred to Appendix~B; here we present an overview. Similarly to the proof of the universality of NCDEs \cite[Theorem B.14]{Kidger2020CDE}, the proof of Theorem \ref{theo:gn_cde_approx} proceeds by showing that you can approximate the truncated signature of the input path. Since linear maps on the signature can approximate any real-valued continuous function defined on compact sets of paths to arbitrary precision, this is sufficient to prove universality. Unlike the proof of NCDE universality, the proof of Theorem \ref{theo:gn_cde_approx} proceeds by a direct construction of the weights necessary to recreate the truncated signature, with each copy calculating one signature term.
\end{proof}

\begin{table*}[htbp]
\caption{Quantitative evaluation of prediction accuracy between GN-CDE and other baselines on node attribute prediction tasks. Here, the sum of extrapolation and interpolation results is presented for evaluating the prediction performance throughout the whole dynamic procedure. Best results are printed in boldface.}
\vspace{-0.2cm}
\label{tab:exp_attribute}
\begin{center}
\begin{normalsize}
    \begin{tabular}{p{58pt}<{\centering}p{59pt}<{\centering}p{62pt}<{\centering}p{62pt}<{\centering}p{62pt}<{\centering}p{62pt}<{\centering}p{62pt}<{\centering}}
    \toprule[1pt]
    Model & Algorithms & Grid & Random  & Power Law  & Small World  & Community \\
    \midrule[0.6pt]
    \multirow{3}*{\shortstack{Heat \\ Diffusion}} & Neural ODE     & 1.091 $\pm$ 0.344  & 0.629 $\pm$ 0.116  & 1.154 $\pm$ 0.189 & 1.093 $\pm$ 0.123  & 1.408 $\pm$ 0.091 \\
    & Neural CDE  & 0.962 $\pm$ 0.306  & 1.601 $\pm$ 0.471  & 1.642 $\pm$ 0.313 & 1.201 $\pm$ 0.179  & 1.857 $\pm$ 0.312 \\
    % & GNODE & 0.237 ± 0.322 \\ 
    & STG-NCDE       & 0.861 $\pm$ 0.541  & 1.676 $\pm$ 0.260  & 2.311 $\pm$ 0.450 & 2.491 $\pm$ 0.313  & 1.922 $\pm$ 0.216 \\
    % \cdashline{2-7}
    & \cellcolor{lightgray!20} GN-CDE         & \cellcolor{lightgray!20}\textbf{0.369} $\bm\pm$ \textbf{0.134}  & \cellcolor{lightgray!20}\textbf{0.521} $\bm\pm$ \textbf{0.202}  & \cellcolor{lightgray!20}\textbf{0.630} $\bm\pm$ \textbf{0.135} & \cellcolor{lightgray!20}\textbf{0.484} $\bm\pm$ \textbf{0.127} & \cellcolor{lightgray!20}\textbf{0.457} $\bm\pm$ \textbf{0.112} \\
    \midrule[0.6pt]
    \multirow{3}*{\shortstack{Gene \\ Regulation}} & Neural ODE     & 3.153 $\pm$ 0.562  & 3.732 $\pm$ 1.066  & 2.549 $\pm$ 0.226  & 2.252 $\pm$ 0.430  & 4.685 $\pm$ 0.759 \\
    & Neural CDE  & 2.967 $\pm$ 0.245  & 6.107 $\pm$ 3.202  & 2.764 $\pm$ 0.162  & 2.302 $\pm$ 0.591  & 5.325 $\pm$ 0.500 \\
    & STG-NCDE       & 6.554 $\pm$ 0.621  & 9.285 $\pm$ 0.808  & 3.917 $\pm$ 1.010  & 4.920 $\pm$ 1.247  & 8.278 $\pm$ 2.597 \\
    % \cdashline{2-7}
     & \cellcolor{lightgray!20} GN-CDE         & \cellcolor{lightgray!20}\textbf{1.388} $\bm\pm$ \textbf{0.262}  & \cellcolor{lightgray!20}\textbf{2.193} $\pm$ \textbf{0.550}  & \cellcolor{lightgray!20}\textbf{0.886} $\bm\pm$ \textbf{0.072}  & \cellcolor{lightgray!20}\textbf{1.331} $\bm\pm$ \textbf{0.323}  & \cellcolor{lightgray!20}\textbf{1.737} $\pm$ \textbf{0.260} \\
    \bottomrule[1pt]
    \end{tabular}
\end{normalsize}
\end{center}
\end{table*}
\vspace{-0.3cm}

In addition to the theoretical analysis of Theorem~\ref{theo:gn_cde_approx}, we also provide an empirical comparison of this simplified version in Section~\ref{sec:experiments} to evaluate its performance.

% ------------------------------------------------------------------
\subsection{Applications}
\label{sec:method_expansion}
\noindent\textbf{Node attributes prediction.}~For the node attributes prediction task, we are equipped with a node attributes matrix $\mathbf{F}_{t_k} \in \mathbb{R}^{|\mathcal{V}| \times m}$ that contains $m$-dimensional attributes of the nodes for model training. As new edges and nodes emerge at time stamp $t_k$, both $\mathbf{A}_{t_k}$ and $\mathbf{F}_{t_k}$ evolve accordingly under the effects of graph structural dynamics and intrinsic dynamics of nodes. Our goal is to predict the node attributes,~\ie~to predict $\mathbf{F}_t$ at unseen time $t$ based on previous observations. To achieve this, we need to learn informative node representations $\mathbf{Z}_t\in\mathbb{R}^{|\mathcal{V}| \times d}$ that can be used for the prediction of the nodes attributes $\mathbf{F}_t$. The objective is to minimize the following expected loss
\begin{equation}
    \min_{f_\theta, \ell_\theta}~\mathbb{E}_{t} [\mathrm{Loss}(\mathbf{F}_t, \tilde{\mathbf{Y}}_t)] \quad \mathrm{for}~t\in(t_0,T],
\end{equation}
where $\tilde{\mathbf{Y}}_t=\ell_\theta(\mathbf{Z}_t)$ is the prediction based on $\mathbf{Z}_t$ which is inferred by Eq.~\eqref{eq:gn_cde}. We can use the (mean) absolute error or squared error to measure the mismatch between $\mathbf{F}_t$ and $\tilde{\mathbf{Y}}_t$. Here, $T$ can be either $T \leq t_N$ which corresponds to interpolation prediction or $T > t_N$ which corresponds to extrapolation prediction.

\noindent\textbf{Dynamic node classification.}~The task of node classification is leveraging the collected information at time stamps $\{t_0, \ldots, t_N\}$ to predict the label of nodes $\tilde{\mathbf{Y}}_t$ at time stamp $t, t>t_N$. The objective function for minimization is
\begin{equation}
    \min_{f_\theta, \ell_\theta}~\mathbb{E}_{t} [\mathrm{Loss}(\mathbf{Y}_t, \tilde{\mathbf{Y}}_t)] \quad \mathrm{for}~t\in(t_N,T],
\end{equation}
where $\tilde{\mathbf{Y}}_t=\ell_\theta(\mathbf{Z}_t)$, $\ell_\theta$ is a MLP followed by a softmax activation function to obtain the class probability, and we can employ the cross-entropy loss for the measurement of prediction error. Commonly, this works on attributed graphs, which means the node attribute matrix $\mathbf{F}_{t_k}$ and edge feature matrix $\mathbf{E}$ are provided in advance. Thus, the controlled differential equation can be written as
\begin{equation}
\label{eq:gn_cde_node}
    \mathbf{Z}_t = \mathbf{Z}_{t_0} + \int^t_{t_0} f_{\theta}\Big(\mathbf{Z}_s, \mathbf{A}_s, \mathbf{F}_s, \mathbf{E}_s\Big) \mathrm{d}\hat{\mathbf{A}}_s \quad \mathrm{for}~t\in(t_0,t_N],
\end{equation}
where $\mathbf{Z}_{t_0}=\zeta_\theta(t_0, \mathbf{A}_{t_0}, \mathbf{F}_{t_0}, \mathbf{E}_{t_0})$. In this equation, we need to interpolate the time augmented adjacency matrix as before and the node attributes $\mathbf{F}_{t_k}$ and edge attributes $\mathbf{E}_{t_k}$ to conduct the integral. Moreover, when $\mathbf{F}_{t_k}$ or $\mathbf{E}_{t_k}$ is not given, we can disable these terms in Eq.~\eqref{eq:gn_cde_node} accordingly.

\noindent\textbf{Temporal link prediction.}~This task is to predict the existence of an edge $e_{ij}$ at unseen future time $t, t>t_N$. The node attribute matrix $\mathbf{F}_{t_k}$ and edge feature matrix $\mathbf{E}$ are also provided, thus we can follow the same setup designed for dynamic node classification tasks. To acquire the link probability among two nodes, we apply a MLP over the concatenation of the corresponding nodes' embeddings.

% ==================================================================
\section{Experiments}
\label{sec:experiments}
In this section, we conduct a comprehensive set of experiments on node attribute prediction tasks to validate the effectiveness of our proposed GN-CDE model. More details of dataset construction and experimental setup can be found in the supplementary material.
% ------------------------------------------------------------------
\subsection{Dynamic Node Property Prediction}
\noindent\textbf{Experimental Setup.}~
We consider two representative dynamic models: heat diffusion dynamics and gene regulatory dynamics. The underlying networks own 400 nodes and are initialized as Grid network, Random network, Power-law network, Small world network, and Community network, respectively. Next, some edges are randomly dropped or added occasionally to simulate dynamic environments. We irregularly sample $120$ snapshots from the continuous-time dynamics to form the entire observations. The standard data splits presented in \cite{Zang2020KDD} are utilized, with $80$ snapshots used for training, $20$ snapshots for testing the interpolation prediction task, and $20$ snapshots for testing the extrapolation prediction task.

\begin{figure*}[htbp]
	\centering
	\begin{minipage}{\linewidth}
		\centering
		\includegraphics[width=0.98\linewidth]{./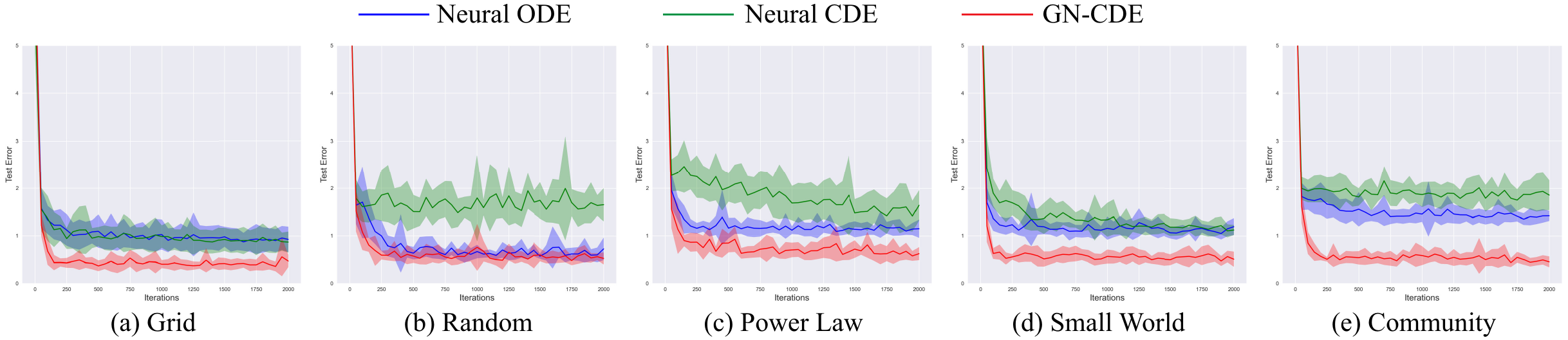}
        \vspace{-0.2cm}
		\caption{\textbf{Heat Diffusion:}~The test errors of Neural ODE, Neural CDE and our GN-CDE models with respect to the optimization iteration count under five different graph structures: (a) grid, (b) Random, (c) power law, (d) small world and (e) community.}
		\label{fig:train_curve_heat}
        \vspace{0.3cm}
	\end{minipage}
    \\
    \begin{minipage}{\linewidth}
		\centering
		\includegraphics[width=0.98\linewidth]{./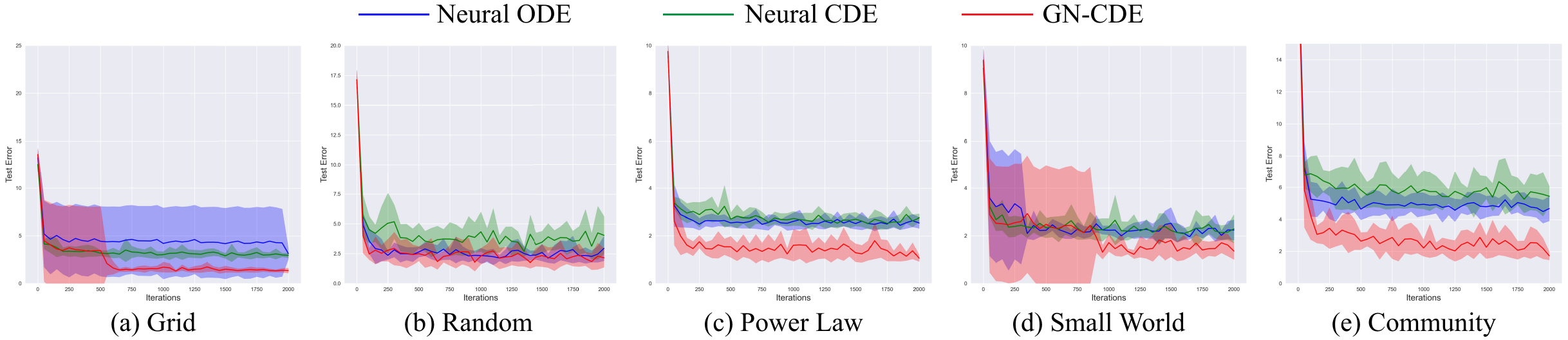}
        \vspace{-0.2cm}
		\caption{\textbf{Gene Regulation:}~The test errors of Neural ODE, Neural CDE and our GN-CDE models with respect to the optimization iteration count under five different graph structures: (a) grid, (b) Random, (c) power law, (d) small world and (e) community.}
	   \label{fig:train_curve_gene}
       % \vspace{0.4cm}
	\end{minipage}
\end{figure*}
% \vspace{-0.2cm}

% \noindent\textbf{Baselines.}~
The methods for comparison include Neural ODE (presented in Eq.~\eqref{eq:gn_cde_abb1}), Neural CDE (presented in Eq.~\eqref{eq:gn_cde_abb2}) and STG-NCDE~\cite{Choi2022AAAI}. We implement our method following Eq.~(\ref{eq:gn_cde_imp}) based on torchcde~\cite{Kidger2020CDE}, which supports differentiable CDE solvers on GPUs. A natural cubic spline is employed to interpolate the observed graph structures continuously over time. We take $\Tilde{\mathbf{A}}_s$ to be the sum of $\mathbf{A}_s$ and $\frac{\mathrm{d}\mathbf{A}_s}{\mathrm{d}s}$ for a tiny implementation. The vector field is parameterized using a single GCN layer followed by a ReLU activation function with the dimension of output node embeddings set as $d=20$. To enable node property prediction, we further attach one linear layer to acquire the final node states. Model parameters are optimized by minimizing the F1 loss between the predicted and ground truth dynamics using the Adam optimizer for $2{,}000$ iterations with an initial learning rate of $0.01$. For a fair comparison, we keep the neural network architecture of the vector field the same for all baselines. We report the mean absolute error averaged over 10 runs with the standard deviation shown aside.

\noindent\textbf{Quantitative Results.}~The results of our proposed GN-CDE model and other baseline methods in node attribute prediction tasks are summarized in Table~\ref{tab:exp_attribute}. We can observe that, our GN-CDE model consistently achieves the lowest prediction error in both heat diffusion prediction tasks and gene regulation prediction tasks across diverse dynamic networks. The improvements over the Neural ODE and Neural CDE methods are marginal but consistent. Additionally, in terms of gene regulation prediction, all approaches exhibit higher prediction errors compared to heat diffusion prediction. This discrepancy can be attributed to the inherent complexity of gene regulation dynamics, as discussed in detail in Appendix~E. Despite this, our proposed approach still outperforms the baseline methods by a significant margin. This demonstrates the effectiveness of our model in accurately modeling the whole dynamics under evolving graph structures, no matter for interpolation or for extrapolation.

\noindent\textbf{Convergence Analysis.}~We then investigate the convergence of our model by reporting the prediction errors with respect to the number of iteration steps. The results for heat diffusion and gene regulation are depicted in Fig.~\ref{fig:train_curve_heat} and Fig.~\ref{fig:train_curve_gene}, respectively. It can be seen that our GN-CDE model exhibits efficient convergence, commonly reaching a stable test error in less than $200$ iterations. The convergence speed is comparable to that of the Neural ODE and Neural CDE methods, while demonstrating significantly lower prediction errors. This confirms the strong convergence capability of our proposed method. However, we also observe a noticeable oscillation for GN-CDE in gene regulation involving a small world graph structure. This arises from a single random seed, for which almost $50\%$ of the structural changes happen during the final $20\%$ of the time span. We conjecture that the high frequency of structural changes within a short duration renders the synthesized data difficult to fit.

\begin{table}[!tbp]
\caption{Quantitative evaluation of different interpolation schemes on Heat Diffusion dynamics over the grid and random networks.}
\vspace{-0.2cm}
\label{tab:exp_scheme}
\begin{center}
\begin{normalsize}
    \begin{tabular}{p{70pt}<{\raggedright}p{65pt}<{\centering}p{65pt}<{\centering}}
    \toprule[1pt]
    Control & Grid & Random \\
    \midrule[0.6pt]
    Linear        & 0.453 $\pm$ 0.152 & 0.554 $\pm$ 0.119  \\
    Rectilinear   & 0.873 $\pm$ 0.272 & 0.777 $\pm$ 0.148   \\
    Cubic Hermite & 0.491 $\pm$ 0.075 & 0.523 $\pm$ 0.072 \\
    Natural Cubic & \textbf{0.369} $\bm\pm$ \textbf{0.134} & \textbf{0.521} $\bm\pm$ \textbf{0.202} \\
    \bottomrule[1pt]
    \end{tabular}
\end{normalsize}
\end{center}
\end{table}

\noindent\textbf{Parameter Sensitivity Analysis.}~Different interpolation schemes directly affect the smoothness of the interpolated graph path for message passing in GN-CDE. To empirically evaluate their impact on model performance, we compare GN-CDE with linear, rectilinear, natural cubic, and cubic hermite interpolation strategies. The experimental results on heat diffusion  dynamics over both grid and random networks are presented in Table~\ref{tab:exp_scheme}. As seen, natural cubic interpolation consistently achieves the best performance, yielding the lowest error on both graph types (0.369 on grid and 0.521 on random networks). This suggests that smoother and higher-order continuous interpolation can better capture the underlying temporal evolution of node embeddings in dynamic graphs. In contrast, less smooth schemes such as rectilinear interpolation result in higher errors, indicating poor capability in modeling the latent dynamics.

\begin{figure}[!tbp]
    \centering
    \includegraphics[width=0.97\linewidth]{./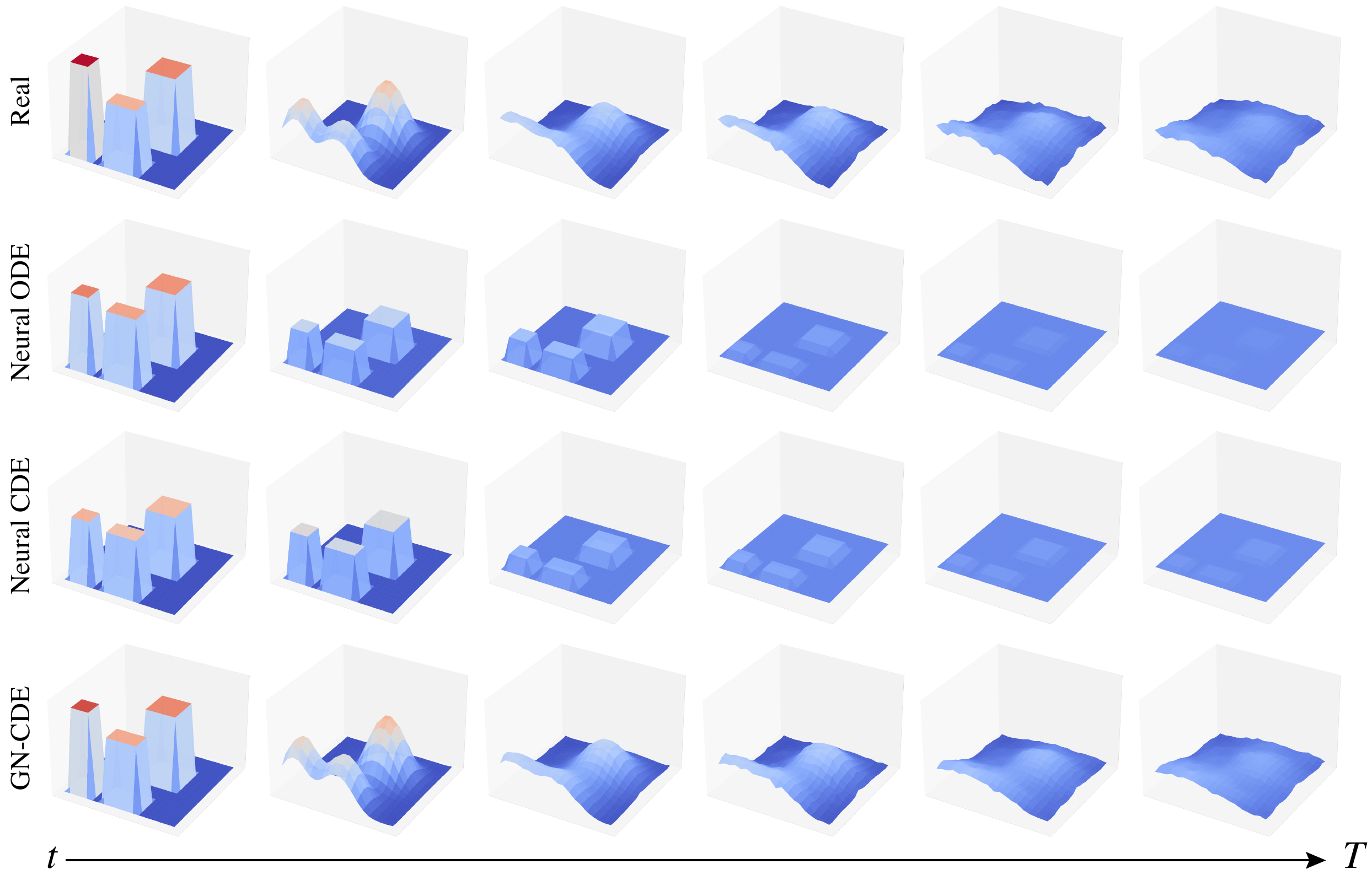}
    \vspace{-0.1cm}
    \caption{Visualization of learned dynamics for heat diffusion over dynamic graphs. Our GN-CDE model fits the dynamics for the whole progress accurately. All experiments have been averaged over 10 random seeds with the stand deviation reported. }
    \label{fig:node_attribute_viz}
\end{figure}

\noindent\textbf{Analysis on Instantaneous Structural Changes.}~
To assess the effectiveness of our approach in dealing with instantaneous structural changes, we compare the ground truth dynamics with the learned dynamics from our method on the heat diffusion task over a time-varying grid network. The results are illustrated in Fig.~\ref{fig:node_attribute_viz}. Due to the instantaneous structural changes during the process of energy diffusion towards neighboring nodes, the node attribute surface of the ground truth displays several irregularities (as shown in the first panel of Fig.~\ref{fig:node_attribute_viz}). Both the Neural ODE and Neural CDE methods fail to precisely fit the surface. This is due to their poor capability in handling structural changes. On the contrary, our GN-CDE model demonstrates favorable performance, accurately fitting the dynamics throughout the entire time span. This confirms the superiority of incorporating structural dynamics into both the vector field and differential term, particularly in handling instantaneous structural changes.

\begin{table}[!tbp]
\caption{Experimental results for dynamic node affinity prediction task on the tgbn-trade and tgbn-genre datasets. Best results are in bold, and second-best are underlined.}
\label{tab:real}
\vspace{-0.25cm}
\begin{center}
\begin{normalsize}
\begin{tabular}{p{70pt}<{\raggedright}p{65pt}<{\centering}p{65pt}<{\centering}} % p{52pt}<{\centering}p{52pt}<{\centering}p{52pt}<{\centering}
\toprule[1pt]
Algorithms & tgbn-trade & tgbn-genre \\ %& tgbn-reddit \\
\midrule[0.6pt]
JODIE~\cite{Kumar2019KDD}      & 0.374 $\pm$ 0.09  & 0.350 $\pm$ 0.04 \\ %& 0.314 $\pm$ 0.01 \\
TGAT~\cite{Xu2020ICLR}         & 0.375 $\pm$ 0.07  & 0.352 $\pm$ 0.03 \\ %& 0.314 $\pm$ 0.01 \\ 
CAWN~\cite{Wang2021ICLR}       & 0.374 $\pm$ 0.09  & - \\ %&- \\
TCL~\cite{Wang2021arXiv}       & 0.375 $\pm$ 0.09  & 0.354 $\pm$ 0.02 \\ %& 0.314 $\pm$ 0.01 \\
GraphMixer~\cite{Cong2023ICLR} & 0.375 $\pm$ 0.11  & 0.352 $\pm$ 0.03 \\ %& 0.314 $\pm$ 0.01 \\
DyGFormer~\cite{Yu2023NIPS}    & 0.388 $\pm$ 0.64  & 0.365 $\pm$ 0.20 \\ %& 0.316 $\pm$ 0.01 \\
DyRep~\cite{Trivedi2019ICLR}   & 0.374 $\pm$ 0.001 & 0.351 $\pm$ 0.001 \\ %& 0.312 $\pm$ 0.001 \\
TGN~\cite{Rossi2020ICML}       & 0.374 $\pm$ 0.001 & 0.367 $\pm$ 0.058 \\ %& 0.315 $\pm$ 0.020 \\
TGNv2~\cite{Tjandra2024TGN2}   & \textbf{0.736} $\pm$ \textbf{0.006} & 0.469 $\pm$ 0.002 \\ %& 0.507 $\pm$ 0.002 \\
\hline
Neural ODE & 0.725 $\pm$ 0.022 & 0.465 $\pm$ 0.008 \\
Neural CDE & 0.721 $\pm$ 0.018 & \underline{0.478 $\pm$ 0.022} \\
STG-NCDE~\cite{Choi2022AAAI}   & 0.625 $\pm$ 0.021 & 0.438 $\pm$ 0.038 \\
% \cdashline{1-4}
\rowcolor{lightgray!20} GN-CDE     & \underline{0.731 $\pm$ 0.017} & \textbf{0.529} $\pm$ \textbf{0.015} \\
\bottomrule[1pt]
\end{tabular}
\end{normalsize}
\end{center}
% \vskip -0.1in
\vspace{-0.1cm}
\end{table}

% ------------------------------------------------------------------
\subsection{Dynamic Link Affinity Prediction}
\noindent\textbf{Experimental Setup.}~We evaluate our framework's performance on two real-world datasets, tgbn-trade and tgbn-genre from the Temporal Graph Benchmark (TGB)~\cite{Huang2023NIPS}. The task is to predict a node’s future affinity towards other nodes based on the observed historical evolution of a temporal graph, which is particularly relevant in applications such as recommendation systems. Each dataset is split chronologically with the ratio of 70\%, 15\%, and 15\% for training, validation, and testing, respectively.

In addition to the differential equation-based Neural ODE/CDE and STG-NCDE baselines, we further incorporate state-of-the-art temporal graph neural network-based dynamic graph models for comparison. These approaches include (1) JODIE~\cite{Kumar2019KDD}; (2) TGAT~\cite{Xu2020ICLR}; (3) CAWN~\cite{Wang2021ICLR}; (4) TCL~\cite{Wang2021arXiv}; (5) GraphMixer~\cite{Cong2023ICLR}; (6) DyGFormer~\cite{Yu2023NIPS}; (7) DyRep~\cite{Trivedi2019ICLR}; (8) TGN~\cite{Rossi2020ICML}; (9) TGNv2~\cite{Tjandra2024TGN2}. We report the averaged NDCE$\mathbf{@}$10 metric over 10 random seeds for all experiments.

\noindent\textbf{Results.}~The results are summarized in Table~\ref{tab:real}. In general, neural differential equation-based methods outperform temporal graph neural network-based approaches by a large margin with one exception of TGN-V2, which is specifically tailored for this task via explicitly incorporating source-target identification into its message-passing process. Nevertheless, our proposed GN-CDE achieves a score of 0.731 on the tgbn-trade dataset, second only to TGN-V2, and the best performance of 0.529 on the tgbn-genre dataset, demonstrating its great effectiveness in capturing complex temporal dependencies and dynamic patterns in real-world scenarios.

% ==================================================================
\section{Conclusion}
\label{sec:conclusion}
In this paper, we propose a novel differential equations-based framework called GN-CDE for representation learning on continuous-time dynamic graphs. Specifically, GN-CDE creates graph paths within the controlled differential equation such that the graph structural dynamics can be naturally incorporated when conducting integration. This is a generic framework that we can apply it to solve various dynamic graph tasks and extend it to different graph types with minor modifications. Experimental results on node attribute prediction tasks across different underlying graph structures demonstrate the superiority of our proposed method compared to other baselines. In our future work, we will explore more challenging applications of dynamic graphs in real-word scenarios. 

% use section* for acknowledgment
\ifCLASSOPTIONcompsoc
  % The Computer Society usually uses the plural form
  \section*{Acknowledgments}
This work is supported by the Hong Kong Innovation and Technology Commission (InnoHK Project CIMDA), the Hong Kong Research Grants Council (Project 11201825), and the Institute of Digital Medicine, City University of Hong Kong (Projects 9229503 and 9610460), in part by the Hong Kong Research Grants Council under Projects 21200522, 11200323 and 11203220, in part by the Hong Kong Innovation and Technology Commission (Project GHP/044/21SZ), and in part by City  University of Hong Kong 11207523, Sichuan Science and Technology Fund and 2025ZNSFSC0511.
\else
  % regular IEEE prefers the singular form
  \section*{Acknowledgment}
\fi

% \clearpage
% \newpage
\bibliographystyle{IEEEtranN}
{\footnotesize
\bibliography{egbib}}

\clearpage
\newpage
\appendices

\vspace{2cm}
\noindent\textbf{\huge Supplementary Materials}
\vspace{0.6cm}

\noindent\textbf{{\large Table of Contents}}
\vspace{0.1cm}

\begin{itemize}
    \item[] \hyperref[sec:app_com]{\textbf{Appendix A:}} Comparison to alternative CDE models
    \vspace{0.1cm}
    
    \item[] \hyperref[sec:app_approx]{\textbf{Appendix B:}} Feasibility of Our Proposed GN-CDE Approximation
    \vspace{0.1cm}
    
    \item[] \hyperref[sec:app_inter]{\textbf{Appendix C:}} Different Interpolation Schemes
    \vspace{0.1cm}
    
    \item[] \hyperref[sec:app_graph_types]{\textbf{Appendix D:}} Other Types of Graph
    \vspace{0.1cm}
    
    \item[] \hyperref[sec:app_additional_res]{\textbf{Appendix E:}} Details of Experimental Setup
\end
{itemize}

% ==================================================================
\section{Comparison to alternative CDE models}
\label{sec:app_com}

% ------------------------------------------------------------------
\subsection{Proof of Theorem~\ref{theo:gn_cde_linear}}
\label{sec:app_proof_gn_cde_linear}
Provide a dynamic graph comprising of a sequence of graph snapshots $\mathcal{G}=\{(t_0, G_{t_0}),...,(t_N, G_{t_N})\}$, with each $t_k \in \mathbb{R}$ the time stamp of the observed graph $G_{t_k}$ and $t_0<\cdot\cdot\cdot <t_N$. Besides, each graph $G_{t_k}$ is endowed with an adjacency matrix $\mathbf{A}_{t_k}$ representing the topological information for $G_{t_k}$. Further, let $\hat{\mathbf{A}}_s$ be some continuous interpolation of $\mathbf{A}_{t_k}$ such that 
$\hat{\mathbf{A}}_{t_k}=(t_k, \mathbf{A}_{t_k})$, then our GN-CDE model can be defined by

\begin{equation}
\label{eq:app_gn_cde}
    \mathbf{Z}_{t_0}=\zeta_\theta(t_0, \mathbf{A}_{t_0}), \quad \mathbf{Z}_t = \mathbf{Z}_{t_0} + \int^t_{t_0} f_{\theta}(\mathbf{Z}_s, \mathbf{A}_s)~\mathrm{d} \hat{\mathbf{A}}_s
\end{equation}
for $t\in(t_0,t_N]$. Let $\beta_s = \begin{bmatrix} \mathbf{Z}_s \\ \mathbf{A}_s\end{bmatrix}$, according to Eq.~\eqref{eq:app_gn_cde} we have 
\begin{equation}
\label{eq:ncde}
    \beta_t = \begin{bmatrix} \mathbf{Z}_t \\ \mathbf{A}_t \end{bmatrix} = \begin{bmatrix} \mathbf{Z}_{t_0} \\ \mathbf{A}_{t_0} \end{bmatrix} + \int^t_{t_0} \begin{bmatrix} f_{\theta}(\mathbf{Z}_s, \mathbf{A}_s) \\ 1 \end{bmatrix} ~\mathrm{d} \hat{\mathbf{A}}_s
\end{equation}
for $t\in(t_0,t_N]$. Then we can let $\tilde{f}_{\theta}(\beta_s)=\begin{bmatrix} f_{\theta}(\mathbf{Z}_s, \mathbf{A}_s) \\ 1 \end{bmatrix}$, the above equation can be rewritten as
\begin{equation}
    \beta_t = \beta_{t_0} + \int^t_{t_0} \tilde{f}_{\theta}(\beta_s) ~\mathrm{d} \hat{\mathbf{A}}_s
\end{equation}
for $t\in(t_0,t_N]$. This formulation is equivalent to the original Neural CDE formulation presented in~\citet{Kidger2020CDE}, then we accomplish the proof.

\begin{table*}[htbp]
\caption{The comparison of different interpolation schemes.}
\vspace{-0.2cm}
\label{tab:com_interpolation}
% \vskip 0.1in
\begin{center}
\begin{normalsize}
\begin{tabular}{p{70pt}<{\centering}p{60pt}<{\centering}ccc}
    \toprule[1pt]
    \multirow{2}*{\shortstack[l]{\textbf{Interpolation} \\ \quad\textbf{Schemes}}} &  \multicolumn{4}{c}{\textbf{Properties}}  \\
    \cline{2-5}
    ~ &  Smoothness  & Dependency on Future & Interpolation Complexity  & Integral Difficulty  \\
    \midrule
    Linear        & (piecewise) & One & Low     & High \\
    Rectilinear   & (piecewise) & No  & Lowest  & High \\
    Natural Cubic & \checkmark  & All & Highest & Lowest \\
    Cubic Hermite & \checkmark  & One & High    & Low \\
    \bottomrule[1pt]
\end{tabular}
\end{normalsize}
\end{center}
\end{table*}

% ==================================================================
\section{Feasibility of Our Proposed GN-CDE Approximation}
\label{sec:app_approx}

\subsection{Maximal Expressivity and the Signature}

\begin{definition}[Maximal expressivity \cite{walker2025structured}]
\label{def:universal_approximation_general}
Let $\mathcal{X}$ be a topological space, and let 
$\mathcal{F} = \{ f_\theta : \mathcal{X} \to \mathbb{R} \mid \theta \in \Theta \}$
be a class of real-valued functions on $\mathcal{X}$, parametrised by some set $\Theta$. 
We say that $\mathcal{F}$ is maximally expressive (or universal) if, for every compact set $\mathcal{K} \subset \mathcal{X}$ and every real-valued continuous function $f : \mathcal{K} \to \mathbb{R}$, the following property holds:
\begin{equation}
\forall \epsilon > 0, \; \exists \theta \in \Theta \quad \text{s.t.} \quad 
\sup_{x \in \mathcal{K}} \big| f(x) - f_\theta(x) \big| < \epsilon.
\end{equation}
\end{definition}

Due to the universality of the signature, a continuous time series model is maximally expressive if, for all $N$ and $t$, there exists a model configuration such that the output path can be arbitrarily close to $S^N_t$ \cite{Lyons2014}. Given a path $X:[0,T]\rightarrow\mathbb{R}^e$, the truncated signature $S^N \in T^N(\mathbb{R}^e)$ of a path $X:[0,T]\rightarrow\mathbb{R}^e$ solves
\begin{equation}
    \label{eq:sig_tensor_cde}
    \mathrm{d}S^N_s = (S^N_s \otimes \mathrm{d}X_s)^{\leq N},
\end{equation}
where $S^N_0 = [1,0,\ldots,0]$. For a word $w=i_1\cdots i_k$, let 
\begin{equation}
    \text{index}(w) = 1 + \sum_{j=1}^ki_je^{k-j}.
\end{equation}
Then, letting $a=\text{index}(w)$, an equivalent formulation of \eqref{eq:sig_tensor_cde} is
\begin{equation}
    \mathrm{d}S^N_{a,s} = A_{abc} S^N_{c,s} \mathrm{d}X_{b,s},
\end{equation}
where $A_{abc}$ is defined by
\begin{equation}
    A_{abc} = \begin{cases}
        1, \quad &a=1+b+e(c-1), \\
        0, &\text{otherwise},
    \end{cases}
\end{equation}
for $b\in\{1,\ldots,e\}$ and $a,c\in \{1,\ldots,\kappa(e,N)\}$, where $\kappa(e,N)$ is the dimension of a depth$-N$ truncated signature of an $e-$dimensional path. 

\subsection{Proof of Theorem \ref{theo:gn_cde_approx}}

As shown in \cite{Kidger2020CDE}, NCDEs can approximate the truncated signature of their input path arbitrarily closely. Therefore, the GN-CDE,
\begin{equation}
\label{eq:app_gn_cde_deri}
    \mathbf{Z}_t = \mathbf{Z}_{t_0} + \int^t_{t_0} f_{\theta}(\mathbf{Z}_s, \mathbf{A}_s)\mathrm{d}\hat{\mathbf{A}}_s,
\end{equation}
is maximally expressive. We now show the same is true given a set of independent copies of the model
\begin{equation}
\label{equ:gcn_vf}
    \mathbf{Z}_t = \mathbf{Z}_{t_0} + \int^t_{t_0} \mathbf{W}^{(1)}\mathrm{d}\hat{\mathbf{A}}_s \mathbf{W}^{(2)} \mathbf{Z}_s\mathbf{W}^{(L)},
\end{equation}
which is a specific version of our approximation \eqref{eq:gn_cde_imp}, where $\sigma$ is the identity, $\mathbf{W}^{(DR1)}=\mathbf{W}^{(1)}$ and $\mathbf{W}^{(DR2)}=[0, \mathbf{W}^{(2)}]$.

In order to handle the independent copies as one system, we introduce an index $r\in\{1,\ldots,R\}$ and expand the driving path to $\bar{\mathbf{A}}_s=I_R\otimes \hat{\mathbf{A}}_s\in\mathbb{R}^{R\times R \times |\mathcal{V}| \times |\mathcal{V}|}$ and the hidden state to $\tilde{\mathbf{Z}}_s \in\mathbb{R}^{R\times R \times |\mathcal{V}|\times d}$. The sum of the independent GN-CDE copies is given by
\begin{equation}\label{eq:lift}
  \mathrm{d}Z_{ij,t}=\tilde{\mathbf{W}}_{rrik}^{(1)}\mathrm{d}\tilde{X}_{rrkl,t}\tilde{\mathbf{W}}^{(2)}_{rrlm}\,\tilde{Z}_{rrmo,t}\tilde{\mathbf{W}}_{rroj}^{(L)}\mathbf{I}_{rr},
\end{equation} 
where $\mathbf{I}\in\mathbb{R}^{R\times R}$ is the identity matrix,
\begin{equation}
  \tilde{\mathbf{W}}^{(1)}
    =\sum_{r=1}^R E_{rr} \otimes \mathbf{W}^{(1),r}\in\mathbb{R}^{R\times R \times |\mathcal{V}| \times |\mathcal{V}|}
\end{equation}
\begin{equation}
  \tilde{\mathbf{W}}^{(2)}
    =\sum_{r=1}^R E_{rr} \otimes \mathbf{W}^{(2),r}\in\mathbb{R}^{R\times R \times |\mathcal{V}| \times |\mathcal{V}|},
\end{equation}
\begin{equation}
  \tilde{\mathbf{W}}^{(L)}
    =\sum_{r=1}^R E_{rr} \otimes \mathbf{W}^{(L),r}\in\mathbb{R}^{R\times R \times d\times d},
\end{equation}
and $E_{rr}$ is the $R\times R$ matrix with a one at $(r,r)$.
Define the row-major flattening maps
\begin{equation*}
  \operatorname{vec}_{|\mathcal{V}|}\colon\mathbb{R}^{|\mathcal{V}|\times d}\to\mathbb{R}^{|\mathcal{V}|d},
  \qquad
  Z_{ij}\longmapsto z_{a},\;a=i+|\mathcal{V}|(j-1),
\end{equation*}
\begin{equation*}
  \operatorname{vec}_{|\mathcal{V}|^{2}}\colon\mathbb{R}^{|\mathcal{V}|\times |\mathcal{V}|}\to\mathbb{R}^{|\mathcal{V}|^{2}},
  \qquad
  X_{kl}\longmapsto x_{b},\;b=k+|\mathcal{V}|(l-1).
\end{equation*}
Setting $z_t=\operatorname{vec}_{|\mathcal{V}|}(Z_t)$ and $x_t=\operatorname{vec}_{|\mathcal{V}|^{2}}(X_t)$,
\begin{equation}
\label{eq:exp_cde}
    \mathrm{d}z_{a,t} = \tilde{B}_{abc} z_{c,t}\mathrm{d}x_{b,t}, \quad\quad \tilde{B}_{abc} = \sum_{r=1}^RW^{3,r}_{ik}W^{1,r}_{lm}W^{2,r}_{oj}.
\end{equation}
Take $R$ to be the number of non-zero entries in $A_{abc}$, i.e. number of triples $(a,b,c)$ which satisfy $a=1+b+n^2(c-1)$, and for each triple $(a_r,b_r,c_r)$ let $(i_r,j_r,k_r,l_r,m_r,o_r)$ be the corresponding indices. Taking $d$ such that $nd>\kappa(n^2,N)$ and
\begin{equation}
    W^{1,r} = E_{l_rm_r}, \quad W^{2,r} = E_{o_rj_r} \quad W^{3,r} = E_{i_rk_r},
\end{equation}
then,
\begin{equation}
    \tilde{B}_{abc} = \begin{cases} A_{abc}, \quad 1\leq a,c \leq \kappa(n^2,N), 1\leq b \leq n^2, \\
    0 \quad \text{ otherwise.}
    \end{cases}
\end{equation}
Finally, taking $z_0=(1,0,\ldots,0)$, the first $\kappa(n^2,N)$ entries of $z_t$ will be the depth $N$ truncated signature and all other elements will be zero.

% ==================================================================
\section{Different Interpolation Schemes}
\label{sec:app_inter}
We consider four different interpolation schemes for our GN-CDE model: 1) Linear control; 2) Rectilinear control; 3) Natural cubic splines; 4) Cubic Hermite splines with backward differences. As we have analyzed the continuity of vector field $f_{\theta}(\mathbf{Z}_s, \mathbf{A}_s)$ and the existence and uniqueness of solutions for our framework in Section.~\ref{sec:gn_cde}, in this part, we analyze the practical performance of them in the smoothness property, interpolation complexity and optimization difficulty when utilizing some ODE solvers (~\eg~Euler method, Dormand-Prince (DOPRI) method~\citep{Dormand1980DOPRI}) for the integral. We summarize the results in Table~\ref{tab:com_interpolation}.

Commonly, the ODE solvers calculate Eq.~\eqref{eq:gn_cde_deri} use another form as
\begin{equation}
\label{eq:gn_cde_diff}
    \frac{\mathrm{d} \mathbf{Z}_t}{\mathrm{d} t} = f_{\theta}(\mathbf{Z}_t, \mathbf{A}_t) \frac{\mathrm{d}\hat{\mathbf{A}}t}{\mathrm{d}t}.
\end{equation}
We also utilize this formula to show different effects caused by the interpolation schemes for our GN-CDE model.

\noindent\textbf{Linear control.}~If we have two observations $(t_0, \mathbf{A}_{t_0})$ and $(t_2, \mathbf{A}_{t_2})$ collected at time stamps $t_0$ and $t_2$, respectively, and we want the get the value for time $t_1$. Linear control is the interpolating along the straight line between these two observations. Formally, $\mathbf{A}_{t_1}$ can be evaluated by solving the equation
\begin{equation*}
    \frac{\mathbf{A}_{t_1} - \mathbf{A}_{t_0}}{t_1 - t_0} = \frac{\mathbf{A}_{t_2} - \mathbf{A}_{t_1}}{t_2 - t_1}.
\end{equation*}
For our GN-ODE model, the vector field $f_{\theta}(\mathbf{Z}_t, \mathbf{A}_t)$ in Eq.~\eqref{eq:gn_cde_diff} is implemented as GCN layers with the formula $f_{\theta}(\mathbf{Z}_s, \mathbf{A}_s)=\sigma(\mathbf{A}_s \mathbf{Z}_s^{(l)} W^{(l)})$, it is Lipschitz continuous. While for the derivation $\frac{\mathrm{d}\hat{\mathbf{A}}t}{\mathrm{d}t}$, its value will be a constant when there exists a graph structural change otherwise zero. The multiplication of these two terms will exhibit some jumps due to the gradient discontinuities at the structural changing moment $(t_k, \mathbf{A}_{t_k})$. Thus, this scheme owns moderate complexity for interpolation and high integral difficulty for the solvers to resolve the jumps.

\noindent\textbf{Rectilinear control.}~For the observations $\{(t_0, \mathbf{A}_{t_0}),...,(t_N, \mathbf{A}_{t_N})\}$, rectilinear control updates the time and feature channels separately in lead-lag fashion as $\hat{\mathbf{A}}_t:[0, 2n] \rightarrow \mathbb{R}^{|\mathcal{V}| \times |\mathcal{V}|\times 2}$ such that $\hat{\mathbf{A}}_{2k}=(t_k, \mathbf{A}_{t_k})$ and $\hat{\mathbf{A}}_{2k+1}=(t_{k+1}, \mathbf{A}_{t_k})$. 

\begin{figure*}[htbp]
\centering
\includegraphics[width=0.76\textwidth]{./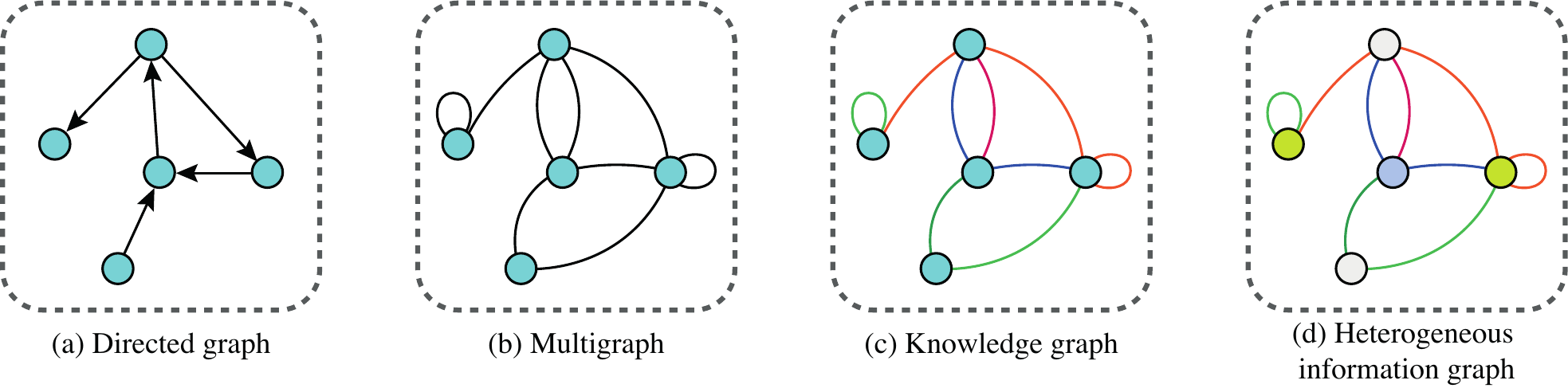}
\vspace{-0.2cm}
\caption{Illustration of other graph types.}
\label{fig:graph_types}
\end{figure*}

When using rectilinear control as the interpolation scheme to build the path, the derivation $\frac{\mathrm{d}\hat{\mathbf{A}}_t}{\mathrm{d}t}$ for each possible interaction could appear with a large value when there exists a graph structural change. This leads to the derivation in Eq.~\eqref{eq:gn_cde_diff} much more non-smooth than linear control. Besides, the length of the created path is twice as long as other schemes, thus it takes a longer time for our model to evaluate and train. The merit of this scheme is its low interpolation complexity since it only needs to pad the value for a time with the previous one.

\noindent\textbf{Natural cubic splines.}~The natural cubic spline is a spline that uses the third-degree polynomial which satisfies the given control points. This is also the recommended interpolation scheme by \cite{Kidger2020CDE}.

With natural cubic splines, the created paths are twice differentiable. This means the ODE solvers do not need to resolve the jumps of $\frac{\mathrm{d} \mathbf{Z}_t}{\mathrm{d} t}$ since the derivation is smooth enough. Thus, the integral difficulty for this scheme is the lowest among the four discussed schemes. However, this scheme can lead to unacceptable time costs for path creation, especially when the scale of the graph is large.

\noindent\textbf{Cubic Hermite splines with backward differences.}~For each interval $[t_k, t_{k+1})$ in among the whole observations $\{(t_0, \mathbf{A}_{t_0}),...,(t_N, \mathbf{A}_{t_N})\}$, cubic Hermite spline keeps that $\hat{\mathbf{A}}_{t_k}=(t_{k}, \mathbf{A}_{t_k})$ and $\hat{\mathbf{A}}_{t_{k+1}}=(t_{k+1}, \mathbf{A}_{t_{k+1}})$, and that the gradient at each node matches the backward finite difference as
\begin{equation*}
    \frac{\mathrm{d} \hat{\mathbf{A}}_{t_k}}{\mathrm{d}t} = \hat{\mathbf{A}}_{t_k} - \hat{\mathbf{A}}_{t_{k-1}}
    \quad\text{and}\quad
    \frac{\mathrm{d} \hat{\mathbf{A}}_{t_{k+1}}}{\mathrm{d}t} = \hat{\mathbf{A}}_{t_{k+1}} - \hat{\mathbf{A}}_{t_k}.
\end{equation*}

This scheme can smooth gradient discontinuities of linear control, thus its integration difficulty is lower than linear control. Besides, since it only needs to solve a single equation on each control point, its interpolation complexity is lower than natural cubic splines. However, the spurious delays in the spline will degrade the accuracy of our model. 

See \cite{Morrill2022TMLR} for more descriptions of these schemes.

% ==================================================================
\section{Other Types of Graph}
\label{sec:app_graph_types}

Our proposed GN-CDE model is not only a generic framework for tackling different graph-related tasks, such as the prediction of node attributes, dynamic nodes classification and temporal link prediction but can also be easily extended to more complex graph structures. We discuss how our model can be applied to solve tasks on directed graphs, multigraph, knowledge graphs, and heterogeneous information networks. An illustration of these graph structures is presented in Fig.~\ref{fig:graph_types}.

\noindent\textbf{Directed graph.}~Vertices in a directed graph are connected by directed edges (See Fig.~\ref{fig:graph_types}(a)). In order to apply our GN-CDE model to tasks established upon directed dynamic graphs, it is necessary to modify our implementation of the vector field from graph convolutional layers defined for undirected graphs to directed graphs. In addition to utilizing an asymmetric adjacency matrix directly, we can also represent the graph structure with spectral-based methods that leverage edge direction proximity~\cite{Tong2020arXiv}, transforms~\cite{Sardellitti2017STSP, Zhang2021NIPS} or local graph motifs~\cite{Monti2018DSW}.

\noindent\textbf{Multigraph.}~A graph in which there are multiple edges between two nodes (See Fig.~\ref{fig:graph_types}(b)). The most common operations for performing graph convolution on a multigraph are graph fusion and separate subgraphs~\citep{Zhou2020Review}. These techniques can cooperate with our GN-CDE model for representation learning on evolving multigraph.

\noindent\textbf{Knowledge graph.}~This graph is a collection of real-world entities and the relational facts between pairs of entities. The underlining graph structure of the knowledge graph is commonly a multi-digraph with labeled edges, where the labels indicate the types of relationships (See Fig.~\ref{fig:graph_types}(c)). In order to learn the graph embeddings, we would first utilize the graph fusion or separate subgraphs technique proposed for processing multigraph to acquire a single graph, then we employ the GN-CDE model presented in Eq.~\eqref{eq:gn_cde_node} but without node attributes matrix for the continuous inference, that is
\begin{equation}
    \mathbf{Z}_t = \mathbf{Z}_{t_0} + \int^t_{t_0} f_{\theta}\Big(\mathbf{Z}_s, \mathbf{A}_s, \mathbf{E}\Big) \mathrm{d}\hat{\mathbf{A}}_s \quad \mathrm{for}~t\in(t_0,t_N],
\end{equation}
where $\mathbf{Z}_{t_0}=\zeta_\theta(t_0, \mathbf{A}_{t_0}, \mathbf{E})$.

\noindent\textbf{Heterogeneous information networks.}~This is a complex graph type that consists of multiple types of nodes or edges (See Fig.~\ref{fig:graph_types}(d)). In order to deal with the dynamic node types and edge features of heterogeneous information networks, we can directly apply the GN-CDE model presented in Eq.~\eqref{eq:gn_cde_node}.

We recommend a comprehensive overview of graph neural networks~\citep{Zhou2020Review} for readers who are interested in exploring the extension of our GN-CDE model to various graph types.

% ==================================================================
\section{Details of Experimental Setup}
\label{sec:app_additional_res}

\subsection{Heat Diffusion Dynamics}
\noindent\textbf{} The dynamics are governed by Newton's law of cooling as follows,
\begin{equation}
\label{eq:heat}
    \frac{\mathrm{d} \bm{x}_t(v_i)}{\mathrm{d} t} = - k^{(i,j)} \sum_{j=1}^n \textbf{A}^{(i,j)}\left(\bm{x}(v_i) - \bm{x}(v_j)\right),
\end{equation}
where $\mathbf{x}_t(v_i)$ represents the state of node $v_i$ at time $t$ and $\textbf{A}^{(i,j)}$ is the heat capacity matrix represents the neighbors of each node $v_i$. 

\noindent\textbf{Data generation.}~To generate the data for our experiments, we first initialize a graph network with $400$ nodes using a network generator for one of Grid network, Random network, Power-law network, Small-world network and Community network. After that, we randomly select $10$ time stamps from $(t_0, T]$ as the occurrence time of structural changes. To mimic the dynamic structural changes, we randomly add or remove some edges with a uniform probability $p=0.02$ at these time stamps. Provided the initial energy for all nodes (the value of $\mathbf{F}_{t_0}$), we let the graph structure during its persistent period as a static graph, and numerically solve the heat diffusion system of Eq.~\eqref{eq:heat} segment by segment, using the Dormand–Prince method. This allows us to simulate the continuously dynamic evolution of the node attributes over time, taking into account the structural changes in the graph.

% ------------------------------------------------------------------
\subsection{Gene regulatory dynamics}
The dynamics for gene regulatory networks are governed by Michaelis-Menten equation as follows,
\begin{equation}
\label{eq:gene}
    \frac{\mathrm{d} \bm{x}_t(v_i)}{\mathrm{d} t} = -b_i \bm{x}(v_i)^f + \sum_{j=1}^n \mathbf{A}^{(i,j)} \frac{\bm{x}^h(v_j)}{\bm{x}^h(v_j) + 1},
\end{equation}
where the first term models the degradation when $f=1$ or dimerization when $f=2$, and the second term represents genetic activation, with the Hill coefficient $h$ determining the level of cooperation in the regulation of the gene.

\noindent\textbf{Data generation.}~The data generation process for gene regulation is similar to that of heat diffusion, except that the differential equation used to model the dynamic system is replaced by Eq.~\eqref{eq:gene}. 

% ------------------------------------------------------------------
\subsection{tgbn-trade}
This dataset from TGB~\cite{Huang2023NIPS} represents the international agricultural trading network among United Nations (UN) member nations from 1986 to 2016, comprising 255 nations (nodes) and 468,245 trade relationships (edges). Each edge denotes the total trade value of all agricultural products exported from one nation to another within a given year. As the data is reported annually, the time granularity of the dataset is yearly. The primary  task associated with this dataset is to predict the proportion of agricultural trade values from one nation to other nations in the subsequent year.

% ------------------------------------------------------------------
\subsection{tgbn-genre}
This dataset from TGB~\cite{Huang2023NIPS} is a bipartite, weighted interaction network between users and music genres, comprising 1,505 nodes and 17,858,395 edges across 133,758 time steps. Each node represents either a user or a music genre, and each edge indicates that a user listened to a genre at a specific time, with the edge weight reflecting the percentage of a song’s association with that genre. The dataset integrates listening records and genre metadata, retaining only genres that account for at least 10\% genre weight per song and that appear at least 1,000 times. Genre names have been standardized to remove typos. The primary task is to predict the frequency with which each user will interact with music genres over the subsequent week, facilitating personalized music recommendations as user preferences evolve over time.

% Can use something like this to put references on a page
% by themselves when using endfloat and the captionsoff option.
\ifCLASSOPTIONcaptionsoff
  \newpage
\fi

% that's all folks
\end{document}